\documentclass[letterpaper]{article} 
\usepackage[]{aaai2026} 
\usepackage{times} 
\usepackage{helvet} 
\usepackage{courier} 
\usepackage[hyphens]{url} 
\usepackage{graphicx} 
\usepackage{natbib} 
\usepackage{caption} 
\usepackage{array} 

\usepackage{amsthm}
\usepackage{tikz}
\usepackage{pgfplots}
\pgfplotsset{compat=1.18}
\usetikzlibrary{arrows.meta}

\usepackage{algorithm}
\usepackage[noend]{algpseudocode}
\usepackage{ifthen}
\usepackage{xspace}

\algrenewcommand\algorithmicindent{0.5em} 

\usepackage{multirow}
\usepackage{booktabs}
\usepackage{amsmath}
\usepackage{amssymb}
\usepackage{makecell}
\usepackage{subcaption}
\nocopyright

\newtheorem{definition}{Definition}
\newtheorem{theorem}{Theorem}
\newtheorem{lemma}{Lemma}
\newtheorem{prop}{Property}

\newcommand\algname[1]{\textsf{#1}\xspace}
\newcommand\namoa{\algname{NAMOA*}}
\newcommand\namoadr{\algname{NAMOA*dr}}
\newcommand\namoamdr{\algname{NAMOA*dr-mvh}}
\newcommand\lnamoamdr{\algname{L-NAMOA*dr-mvh}}

\newcommand{\frankenmvh}{\lnamoamdr}
\newcommand{\soundmvh}{\namoamdr}

\newcommand\apex{\algname{A*pex}}

\newcommand\nd{\mathrm{Nd}\xspace}

\newcommand{\open}{\textsc{Open}\xspace}
\newcommand{\mvh}{\textsc{MVH}}

\newcommand{\hip}{\vecv{h}^{\mathrm{IP}}}

\newcommand{\sols}{\mathrm{Sols}}

\newcommand{\vecv}[1]{\mathbf{#1}}

\newcommand{\weakdom}{\preceq}

\newcommand{\Tr}{\mathrm{Tr}}

\newcommand{\sstart}{s_{\mathrm{start}}}
\newcommand{\sgoal}{s_{\mathrm{goal}}}

\newcommand{\GTcl}[1]{G^{\Tr}_{\mathrm{cl}}(#1)}
\newcommand{\GTclbare}{G^{\Tr}_{\mathrm{cl}}}

\newcommand{\lexprec}{\prec_{\mathrm{lex}}}
\newcommand{\lexsucc}{\succ_{\mathrm{lex}}}

\newcommand{\OS}[1]{{\color{teal}{OS: #1}}}
\newcommand{\ignore}[1]{}
\newcommand{\Continue}{\State \textbf{continue}}

\newcommand{\panda}{\texttt{panda-RRG-8}}
\newcommand{\ny}{\texttt{NY-3obj}}
\newcommand{\netm}{\texttt{netM-10}}

\newif\ifarxivappendix
\arxivappendixtrue
\ifdefined\withoutappendixversion
  \arxivappendixfalse
\fi
\ifdefined\withappendixversion
  \arxivappendixtrue
\fi

\ifarxivappendix

  \newcommand{\soundmvhdefinitionnote}{formally defined in~\soundmvhappendixref}
  \newcommand{\soundmvhproofnote}{Proof is provided in~\soundmvhappendixref.}
  \newcommand{\frankenmvhproofnote}{Formal proofs for the correctness of the \textsc{ChooseH} and \textsc{LocalDomCheck} procedures, as well as the full proof of Thm.~\ref{thm:frankenmvh-correctness}, are provided in~\frankenmvhappendixref.}
\else
  \newcommand{\soundmvhdefinitionnote}{Due to lack of space, the formal definition is available in~\citeauthor{wolff2026bridging}~\shortcite{wolff2026bridging}}
  \newcommand{\soundmvhproofnote}{Proof is provided in~\citeauthor{wolff2026bridging}~\shortcite{wolff2026bridging}.}
  \newcommand{\frankenmvhproofnote}{Formal proofs for the correctness of the \textsc{ChooseH} and \textsc{LocalDomCheck} procedures, as well as the full proof of Thm.~\ref{thm:frankenmvh-correctness}, are provided in~\citeauthor{wolff2026bridging}~\shortcite{wolff2026bridging}.}
\fi

\title{Bridging Multi-Valued Heuristics and Dimensionality Reduction in Multi-Objective Search}

\author{
    Maya Wolff,\textsuperscript{\rm 1}
    Ariel Felner,\textsuperscript{\rm 2}
    Oren Salzman\textsuperscript{\rm 1}
}

\affiliations{
    \textsuperscript{\rm 1}Department of Computer Science, Technion -- Israel Institute of Technology, Haifa, Israel\\
    \textsuperscript{\rm 2}Faculty of Computer and Information Science, Ben-Gurion University of the Negev, Beer-Sheva, Israel\\
    wo@cs.technion.ac.il, felner@bgu.ac.il, osalzman@cs.technion.ac.il
}

\begin{document}
\setcounter{secnumdepth}{2}
\maketitle

\begin{abstract}

Multi-objective shortest-path (MOSP) algorithms traditionally rely on single-valued heuristics (SVHs), which associate each state with a single admissible cost vector. While SVHs provide safe lower bounds, they fail to capture the trade-off structure of the Pareto frontier and often yield weak search guidance. Multi-valued heuristics (MVHs) address this limitation by mapping states to sets of cost estimates, enabling a richer approximation of possible trade-offs.

Modern MOSP algorithms are highly dependent on dimensionality reduction (DR) techniques to efficiently perform dominance checks. However, integrating MVHs with DR introduces subtle correctness challenges. We show that naively combining DR with MVHs destroys the ordering invariants required for DR, leading to unsound and incomplete search. To address this issue, we develop the first theoretical frameworks for safely integrating MVHs with DR.

First, we introduce \namoamdr, a theoretical baseline that restores search correctness by enforcing heuristic consistency. Recognizing the practical limitations of this approach, we then introduce our primary contribution \frankenmvh. This algorithm employs a ``lazy,'' optimistic approach to DR, preserving exact correctness with only an \emph{admissible} MVH by dynamically detecting and repairing local ordering violations.
Across a range of benchmarks, L-NAMOA*dr-mvh matches or improves over state-of-the-art MOSP algorithms, and achieves speedups of over 10x in instances where the additional guidance provided by the MVH translates into stronger pruning.

\end{abstract}

\section{Introduction \& Related Work}
In the multi-objective shortest-path (MOSP) problem we are interested in finding paths between two vertices of a graph while considering multiple, often conflicting objectives. 
Applications of MOSP range from transporting hazardous materials considering travel distance and risk~\cite{bronfman2015maximin} and inspecting a region of interest using cameras placed on-board robotic platforms~\cite{FuKSA23}. This family of problems is not new, with results dating back decades~\cite{vincke1976,hansen-1980,climaco2012multicriteria,current1993multiobjective,skriver2000classification,ulungu1991multi}. Nevertheless, there has been renewed interest and significant progress in the field of heuristic search for multi-objective search (MOS)~\cite{SalzmanF0ZCK23}, reflecting the reality that real-world systems rarely optimize a single measure \cite{salzman2025multi}.

State-of-the-art (SOTA) heuristic MOS algorithms rely primarily on two mechanisms to improve their efficiency. First, they utilize \emph{heuristics} to guide the search. Most heuristic MOS algorithms use a single-valued heuristic (SVH), which estimates the cost to the goal from every state (see, e.g.,~\cite{ahmadi2024nwmoa,RenHLFKSRC25}). However, there are a handful of works that considered the more-informative notion of multi-valued heuristics (MVH), which map states to sets of cost estimates, enabling a richer approximation of possible trade-offs (see, e.g.,~\cite{namoa,geisser_admissible,ZhangSFKSUK23}). Second, modern algorithms employ \emph{dimensionality reduction} (DR) techniques to improve the efficiency of dominance checking operations (i.e., assessing if one solution is strictly better than another across all problem objectives). Improving these dominance checks is crucial, as they are often the key computational bottleneck of many SOTA algorithms for MOSP~\cite{pulido_dimred}.

Each mechanism (namely, MVHs and DR) by itself is a powerful algorithmic tool. Yet, utilizing both simultaneously is highly non-trivial and has been identified as a key challenge in the field \cite{SalzmanF0ZCK23}. 
To this end, in this paper we pinpoint the precise source of this incompatibility and develop two algorithmic frameworks to overcome it.

First, we introduce \namoamdr which serves as a theoretical baseline. It demonstrates that if we impose constraints on the MVH (specifically, requiring it to be consistent) a relaxed approach of DR can be used safely. However, while theoretically sound, this approach may introduce computational inefficiencies as the algorithm may need to generate redundant paths. Moreover, the approach relies on the difficult task of constructing consistent MVHs.

To overcome these practical bottlenecks, we introduce our primary contribution: \frankenmvh. Rather than forcing the aforementioned constraints upfront, this algorithm adopts a ``lazy,'' optimistic approach. It performs fast DR by default, but dynamically monitors the search to detect cases when the heuristic guidance might cause an error. Only when a potential error is detected does the algorithm fall back to a rigorous, standard dominance check. 

Consequently, \frankenmvh\ preserves soundness and completeness while requiring only a standard, admissible MVH. By avoiding the potential overhead of \namoamdr, it successfully combines the stronger guidance of MVHs with the computational efficiency of DR. We empirically evaluate our algorithms on a range of benchmarks, demonstrating that \frankenmvh\ matches or improves upon \namoadr, with speedups of over $10\times$ when MVH guidance enables stronger pruning.



\section{Notation \& Algorithmic Background}
\label{sec:background}

\subsection{Notation}
\label{sec:notation}

We follow standard notation in MOS~\cite{SalzmanF0ZCK23}. Boldface font indicates vectors, and lower-case and upper-case symbols indicate elements and sets, respectively. For an $N$-dimensional vector $\vecv v$, $v_i$ denotes its $i$-th component. Vector addition is defined as element-wise summation.

Let $\mathbf{p}$ and $\mathbf{q}$ be $N$-dimensional vectors. We say that $\mathbf{p}$ \emph{weakly dominates} $\mathbf{q}$, denoted as $\mathbf{p} \preceq \mathbf{q}$, if $p_i \le q_i$ for all $i=1,\dots,N$. We say that $\mathbf{p}$ \emph{dominates} $\mathbf{q}$, denoted as $\mathbf{p} \prec \mathbf{q}$, if $\mathbf{p}$ weakly dominates $\mathbf{q}$ and there exists an index $j$ such that $p_j < q_j$. When $\mathbf{p} \nprec \mathbf{q}$ and $\mathbf{q} \nprec \mathbf{p}$, we say that $\mathbf{p}$ and~$\mathbf{q}$ are mutually non-dominated. Furthermore, we say that~$\mathbf{p}$ is lexicographically smaller than $\mathbf{q}$, denoted as $\mathbf{p} <_{\rm lex} \mathbf{q}$, if $p_k < q_k$ for the first index $k$ such that $p_k \ne q_k$.

Let $X$ be a set of $N$-dimensional vectors. We denote by $\nd(X)$ the cost-unique subset of $X$ containing only mutually non-dominated vectors. The truncation function $\Tr$ maps a vector $\mathbf{v} = (v_1, \dots, v_N)$ to the $(N-1)$-dimensional vector obtained by removing its first component, i.e., $\mathrm{Tr}(\mathbf{v}) = (v_2, \dots, v_N)$. With a slight abuse of notation, we define the truncated set of $X$ as $\Tr(X) = \nd(\{\Tr(\mathbf{x}) \mid \mathbf{x} \in X\})$, similarly written as $X^{\Tr}$.

\label{notation:t_discarding}
A vector $\vecv{v}\in\mathbb{R}^N$ is \emph{$t$-discarded} by a set $X\subseteq\mathbb{R}^N$ if there exists a vector $\vecv{u}\in X$ such that $\Tr(\vecv{u})\in \Tr(X)$ and one of the following holds:
(i)~$u_1 < v_1$ and $\Tr(\vecv{u}) \preceq \Tr(\vecv{v})$, or
(ii)~$u_1 = v_1$ and $\Tr(\vecv{u}) \prec \Tr(\vecv{v})$.


A \emph{multi-objective search graph} is a tuple $G = \langle S, E, c \rangle$, where $S$ is a finite set of states, $E \subseteq S \times S$ is a finite set of edges, and $c : E \rightarrow \mathbb{R}_{\ge 0}^N$ is a cost function that associates a vector of $N$ non-negative cost components (namely, the objectives) with each edge. The successor function is defined as $\mathrm{Succ}(s) = \{ s' \in S \mid (s,s') \in E \}$. A path $\pi$ from state~$s_1$ to state $s_\ell$ is a sequence of states $[s_1,\dots,s_\ell]$ such that $(s_j,s_{j+1}) \in E$ for all $j=1,\dots,\ell-1$. The cost of a path $\pi$ is $\mathbf{c}(\pi) = \sum_{j=1}^{\ell-1} c(s_j,s_{j+1})$.

A \emph{multi-objective search instance} is a tuple $P = \langle S,E,c,\sstart,\sgoal \rangle$, where $\sstart \in S$ and $\sgoal \in S$ are the start and goal states, respectively. A path connecting $\sstart$ to $\sgoal$ is called a \emph{solution}. A solution is \emph{Pareto-optimal} if its cost is not dominated by any other solution. The \emph{Pareto-optimal solution set}, denoted $\Pi^*$, is the set of all Pareto-optimal solutions.

Finally, multi-objective search algorithms often use a heuristic function to guide the search.

A \emph{single-valued heuristic} (SVH) is a function $h:S\rightarrow\mathbb{R}_{\ge0}^N$ that estimates the cost to $\sgoal$ from every state $s$. 
We say that $h$ is admissible if $h(s)\preceq \vecv c(\pi)$ for every state $s$ and for all path $\pi$ from $s$ to $\sgoal$.
We say that $h$ is consistent if $h(\sgoal) = \mathbf{0}$ and $h(s) \preceq c(s,s') + h(s')$ for all $(s,s') \in E$.

A \emph{multi-valued heuristic} (MVH) $H:S\rightarrow 2^{\mathbb{R}_{\ge0}^N}$ maps each state to a set of mutually non-dominated vectors.
We say that $H$ is admissible if for every state $s$ and for all path $\pi$ from $s$ to $s_\text{goal}$, $H(s)$ contains a cost vector that weakly dominates $\vecv{c}(\pi)$.
We say that $H$ is consistent if $H(\sgoal) = \{\mathbf{0}\}$ and for every $(s, s')\in E$ and for every $\mathbf{h} \in H(s)$, there exists $\mathbf{h}' \in H(s')$ such that $\mathbf{h} \preceq c(s,s') + \mathbf{h}'$.


\subsection{The Pruning Power of MVHs}
\label{sec:mvh-pruning}

Intuitively, the advantage of an MVH over an SVH lies in its ``informativeness''. While an SVH provides one lower-bound estimate, an MVH can capture a set of non-dominated trade-offs that more accurately reflects the costs to the goal.

This is illustrated in Fig.~\ref{fig:mvh_vs_svh_origin} where an SVH (blue) provides a ``loose'' lower bound. While this point dominates a large area of the cost space, its proximity to the origin makes it a weak filter by the actual costs of known solutions. 
In contrast, the MVH provides a frontier of ``tighter'' estimates (red) that sit further from the origin. Since MVH estimates are tighter while remaining admissible, they make it more likely that a path cost will be dominated by an existing solution. This allows the search to prune suboptimal branches early, whereas a looser bound would be forced to explore them.

\begin{figure}[t]
\centering
\begin{tikzpicture}
\begin{axis}[
    width=0.7\linewidth, 
    height=0.5\linewidth,
    xmin=0, xmax=22,
    ymin=0, ymax=20,
    axis lines=left,
    xlabel={$\text{cost}_1$},
    ylabel={$\text{cost}_2$},
    ylabel style={at={(axis cs:-1.7,22)}, anchor=south east, xshift=-5pt, yshift=5pt},
    xlabel style={at={(axis cs:22,2)}, anchor=north west, yshift=-5pt},
    tick label style={font=\footnotesize}, 
    xtick={0,5,10,15,20},
    ytick={0,5,10,15,20},
    extra x ticks={2,11,15,17},
    extra y ticks={8,11,12,14},
    extra x tick labels={}, 
    extra y tick labels={},
    extra tick style={grid=major, grid style={line width=.2pt, draw=gray!30}},
    grid=major,
    grid style={line width=.1pt, draw=gray!15},
    axis line style={line width=1pt, -stealth}, 
    enlargelimits=false,
    clip=false,
    legend style={
        draw=white,
        font=\small,
        at={(1.05, 0.5)}, 
        anchor=west,       
        row sep=0.1cm
    }
]

\fill[red!20, opacity=0.5] (axis cs:0,0) rectangle (axis cs:2,14);
\fill[red!20, opacity=0.5] (axis cs:0,0) rectangle (axis cs:11,12);
\fill[red!20, opacity=0.5] (axis cs:0,0) rectangle (axis cs:15,11);
\fill[red!20, opacity=0.5] (axis cs:0,0) rectangle (axis cs:17,7);

\fill[blue!15, draw=blue!40, dashed, thin, opacity=0.6] (axis cs:0,0) rectangle (axis cs:3,8);


\addlegendimage{draw=red!80!black, line width=1.2pt}
\addlegendentry{Pareto Frontier}

\addplot[
    const plot,
    draw=red!80!black,
    line width=1.2pt,
    no marks, 
    forget plot
] coordinates {
    (3, 15) 
    (12, 13)
    (16, 12)
    (18, 8) 
};

\addplot[
    only marks,
    mark=*,
    mark size=2.5pt,
    draw=blue!80!black,
    fill=blue!80!black
] coordinates {(3,8)};
\addlegendentry{SVH Point}
\addlegendimage{area legend, fill=blue!15, draw=blue!40, dashed}
\addlegendentry{SVH Region}

\addplot[
    only marks, 
    mark=diamond*,
    mark size=3pt,
    mark options={fill=white, draw=red!80!black, solid},
    forget plot
] coordinates {
    (2,14)
    (11,12)
    (15,11)
    (17,7)
};

\addlegendimage{only marks, mark=diamond*, mark size=3pt, mark options={fill=white, draw=red!80!black, solid}}
\addlegendentry{MVH Point}
\addlegendimage{area legend, fill=red!20}
\addlegendentry{MVH Region}

\end{axis}
\end{tikzpicture}
\caption{Comparison of pruning regions by SVH (blue) and MVH (red). The solid red line represents the Pareto-optimal solution frontier of the problem.}
\label{fig:mvh_vs_svh_origin}
\end{figure}

\subsection{Algorithmic background}

\subsubsection{Multi-Objective A* (MOSA*).} Best-first search MOS algorithms, often generalized as MOSA*~\cite{SkylerSAFSC0K0U24}, compute $\Pi^*$ by maintaining a priority queue \open{} of generated but not yet extracted paths. A fundamental difference between standard single-objective A* and MOSA* is path tracking: while standard A* typically expands a state only once via its shortest path, MOSA* must maintain a separate search node for every distinct, mutually non-dominated cost vector $\vecv{g}$ reaching that state. When utilizing a traditional single-valued heuristic (SVH), each such node $n$ is evaluated by a single vector $\vecv{f}(n) = \vecv{g}(n) + \vecv{h}(s(n))$. Prominent MOSP algorithms, including \namoa\ and \namoadr, are specific instantiations of this MOSA* framework. However, as we will demonstrate, this straightforward evaluation where one $\vecv{g}$-value maps to exactly one $\vecv{f}$-value becomes more complex when integrating MVHs.

To ensure efficiency, MOSA* algorithms rely on \emph{dominance checks} to determine whether a newly generated or extracted node can still contribute to the Pareto-optimal solution set. This typically involves two layers of pruning: (i) \emph{local dominance checking}, where a node's cost $\vecv{g}$ is compared against previously discovered paths to the same state to prune locally sub-optimal routes, and (ii) \emph{global dominance checking}, where a node's evaluation $\vecv{f}$ is compared against known solutions at the goal to prune paths that are already pruned. Because a node must be compared against entire sets of mutually non-dominated vectors, these dominance checks constitute the primary computational bottleneck of MOS, directly motivating the need for DR.

\ignore{\subsubsection{Best-First Search}
Best-first search MOS algorithms~\cite{SkylerSAFSC0K0U24} compute~$\Pi^*$ while maintaining a priority queue $\open$ of generated but not yet extracted nodes. In MOS (and unlike single-objective A*), multiple search nodes may share the same state $s$ with different cost vectors, as several Pareto-optimal paths may reach~$s$. Each node $n$ stores a state $s(n)$ and an accumulated cost vector~$\vecv{g}(n)$, with evaluation vector $\vecv{f}(n) = \vecv{g}(n) + \vecv{h}(s(n))$.
Initially, $\open$ contains a single node for the start state~$\sstart$ with cost $\vecv{0}$.

At each iteration, the algorithm extracts from $\open$ a node with minimal $\vecv{f}$ according to its selection rule, and performs a dominance check to determine whether the node can still contribute to the solution set. If not, it is discarded. Otherwise, if the node contains a goal state, it is added to the solution set. If not, it is extracted by generating successor nodes, each of which is again subjected to a dominance check before either being discarded or inserted into $\open$. The search terminates when $\open$ becomes empty.

Algorithms such as \namoa and \namoadr\ conform to this framework, differing mainly in the information stored in nodes, the node-selection rule, and the dominance tests they apply.
}

\subsubsection{\namoa} 
\namoa~\cite{namoa} is an instantiation of the MOSA* framework that natively supports MVHs. Instead of mapping to a single $\vecv{f}$-value, a path reaching state $s$ with cost $\vecv{g}$ under an MVH~$H$ is evaluated using a \emph{set} of mutually non-dominated $\vecv{f}$-values, defined as $F(s,\vecv{g}) = \nd(\{\vecv{g}+\vecv{h} \mid \vecv{h} \in H(s)\})$. Consequently, each node in \open{} is represented as a triplet $n=\langle s, \vecv{g}, F(s,\vecv{g})\rangle$.

To execute dominance checks, \namoa\ maintains the following sets. For \emph{global dominance checking}, it maintains $\sols$, the non-dominated cost vectors of solutions found so far. For \emph{local dominance checking}, it maintains two sets of $\vecv{g}$-values for each state $s$: $G_{\mathrm{op}}(s)$ (paths generated but not yet extracted) and $G_{\mathrm{cl}}(s)$ (paths already extracted).\footnote{Here `op` and `cl` correspond to open and closed, respectively.} 

At each iteration, \namoa\ extracts a node whose $F$-set contains an $\vecv{f}$-value that is not dominated by any other $\vecv{f}$-value in \open{}. When a successor node is generated, it is pruned globally if all its $\vecv{f}$-values are dominated by $\sols$, and pruned locally if its $\vecv{g}$-value is dominated by $G_{\mathrm{op}}(s')$ or~$G_{\mathrm{cl}}(s')$. While \namoa\ is optimal in the number of path extractions when using a consistent heuristic~\cite{mandow_consistent}, performing full dominance checks is computationally expensive.

\ignore{
\namoa~\cite{namoa} is a best-first MOS algorithm designed to compute~$\Pi^*$ for a given MOS instance.
\namoa maintains a priority queue $\open{}$ of nodes corresponding to paths that are candidates for extraction as potential partial solutions.

A key feature of \namoa\ is that it natively supports multi-valued heuristics.
For a path from $\sstart$ to $s$ with cost $\vecv{g}$ and a multi-valued heuristic $H$, \namoa\ maintains a \emph{set} of non-dominated $\vecv{f}$-values (and not a single $\vecv f$-value as in single-objective A*), defined as
  $F(s,\vecv{g}) = \nd{}\left( \bigl\{\vecv{g} + \vecv{h} \mid \vecv{h} \in H(s)\bigr\} \right).$
Each entry of a node $n$ in $\open{}$ is a triplet $n= \langle s, \vecv{g}, F(s,\vecv{g}) \rangle $.

For each state $s$, \namoa\ maintains two sets of $\vecv g$-values:~$G_{\mathrm{op}}(s)$, containing the $\vecv g$-values of generated but not yet extracted nodes, and $G_{\mathrm{cl}}(s)$, containing the $\vecv g$-values of already extracted nodes.\footnote{Here `op` and `cl` correspond to open and closed, respectively.}
The algorithm also maintains a set $\sols$ of non-dominated cost vectors of solutions found so far (and will eventually contain~$\Pi^*$).
At each iteration, the algorithm selects from $\open{}$ a triplet $n = (s, \vecv{g}, F)$ such that some $\vecv{f} \in F$ is not dominated by any $\vecv f$-value vector in any other triple in $\open{}$.
When a successor node $n'=(s',g',F')$ is generated, it is discarded (pruned) if every vector $\vecv f'\in F$ is dominated by some vector in $\sols$ (in such a case, any solution from the new node will be dominated by an already-found solution).
The successor is also filtered against $G_{\mathrm{cl}}(s')$ and $G_{\mathrm{op}}(s')$ to prune dominated paths.
Finally, the search terminates when $\open{}$ is empty.

When $H$ is admissible, \namoa\ returns the full Pareto-optimal solution set~$\Pi^*$~\cite{namoa}.
When $H$ is also consistent, the algorithm guarantees that only strictly necessary nodes are extracted, effectively pruning dominated or redundant search paths, making \namoa\ optimal among all admissible multiobjective search algorithms~\cite{mandow_consistent}.
}

\subsubsection{\namoadr}  
\label{sec:namoadr} 

\begin{algorithm}[t]
\caption{\textcolor{blue}{\namoadr} (\namoamdr{})}
\label{alg:soundmvh}
\begin{algorithmic}[1]
\Statex \textbf{Input:} A MOS instance $(S,E,\vecv c,\sstart,\sgoal)$;
\Statex \hspace{10mm} a consistent \textcolor{blue}{SVH $\vecv h$}. \Comment{\textcolor{red}{MVH $H$}}
\Statex \textbf{Output:} A cost-unique Pareto-optimal solution set $\sols$.

\vspace{2mm}

\State $\text{Sols}\gets\emptyset$
\ForAll{$s\in S$} \Comment{\textcolor{red}{\textbf{for all} $\vecv h\in H(s)$ \textbf{do}}} \label{line:alg1-source-init}
  \State $\textcolor{blue}{\GTcl{s}}\gets \emptyset$ \Comment{\textcolor{red}{$G^{\Tr}_{\mathrm{cl}}(s,\vecv h) \gets \infty$}}
\EndFor

\State $n\gets$ new node with $s(n)=\sstart$
\State $\text{parent}(n)\gets \text{NULL}$;
       $\vecv g(n)\gets \vecv 0$;
       $\vecv f(n)\gets \vecv h(\sstart)$
\State $\open{}\gets\{n\}$

\While{$\open{}\neq\emptyset$}
  \State $n\gets \open{}.\text{Pop}$ \label{line:alg1-lex-smallest-f}
  \Comment{lex smallest $\vecv f$-value}

  \If{$\textcolor{blue}{\GTcl{s(n)}} \weakdom \Tr(\vecv g(n))$} \label{line:alg1-local-dom-check-1}
    \Comment{\textcolor{red}{$G^{\Tr}_{\mathrm{cl}}(s(n), \vecv h(n))$}}
    \Continue \label{alg1line:local-pruning-1}
  \EndIf

  \If{$\textcolor{blue}{\GTcl{\sgoal}} \weakdom \Tr(\vecv f(n))$} \label{line:alg1-global-dom-check-1}
    \Continue \label{line:alg1-global-pruning-1}
  \EndIf

  \State $\textcolor{blue}{\GTcl{s(n)}} \gets
    \nd\!\left(
        \textcolor{blue}{\GTcl{s(n)}} \cup \Tr(\vecv g(n))
    \right)$ \label{line:ctcl-insertion}
  \Statex \Comment{\textcolor{red}{$G^{\Tr}_{\mathrm{cl}}(s(n), \vecv h(n))$}}

  \If{$s(n)=\sgoal$}
    \State $\text{Sols}\gets \text{Sols}\cup\{n\}$
    \Continue
  \EndIf

  \ForAll{$s'\in \mathrm{Succ}(s(n))$}
    \State $n'\gets$ a new node with $s(n')=s'$ \label{line:new-successor}
    \State $\mathrm{parent}(n')\gets n$;
           $\vecv g(n')\gets \vecv g(n)+\vecv c(s(n),s')$

    \State $\vecv f(n')\gets \vecv g(n')+ \textcolor{blue}{\vecv h(s')}$ \label{line:new-successor-end}
    \Statex \Comment{\textcolor{red}{$\forall \vecv h'\in H(s')$ s.t. $\vecv h(s(n))\preceq \vecv c(s(n),s') + \vecv h'$}} \label{line:alg1-successor-generation}

    \If{$\textcolor{blue}{\GTcl{s'}} \weakdom \Tr(\vecv g(n'))$} \label{line:alg1-local-dom-check-2}
      \Comment{\textcolor{red}{$G^{\Tr}_{\mathrm{cl}}(s', \vecv h')$}}
      \Continue \label{line:alg1-local-pruning-2}
    \EndIf

    \If{$\textcolor{blue}{\GTcl{\sgoal}} \weakdom \Tr(\vecv f(n'))$} \label{line:alg1-global-dom-check-2}
      \Continue \label{line:alg1-global-pruning-2}
    \EndIf

    \State $\open{}.\text{Insert}(n')$
      \Comment{ordered lex. according to $\vecv f$-value}
  \EndFor
\EndWhile

\State \Return $\text{Sols}$
\end{algorithmic}
\end{algorithm}

To alleviate the computational bottleneck of full-vector dominance checks, \citeauthor{pulido_dimred}~\shortcite{pulido_dimred} proposed \namoadr. This algorithm restricts the search to use a consistent SVH, enabling the use of dimensionality reduction (DR). In the context of MOS algorithms, DR hinges on the following property:
\begin{prop}    
\label{prop:dr}
    Assume that a best-first search MOS algorithm
    (i) uses a consistent SVH $h$ and
    (ii) orders \open{} lexicographically according to the $\vecv{f}$-value of nodes (i.e., $\vecv{g}+\vecv{h}$).
    Let $n,n'$ be two search nodes corresponding to the same state $s$.
    If $n$ is extracted before $n'$ from \open then both $f_1(n)\leq f_1(n')$ and $g_1(n) \leq g_1(n')$.    
\end{prop}

To understand why Property~\ref{prop:dr} holds, note that nodes are extracted from \open{} in lexicographically non-decreasing order of $\vecv{f}$-values. Because the $\vecv{h}$-value is fixed for a given state $s$ when using an SVH, the nodes of state $s$ are extracted in lexicographically non-decreasing order of $\vecv{g}$-values. 

The implication of Property~\ref{prop:dr} is that the first cost component ($g_1$ or $f_1$) of a newly extracted node will always be greater than or equal to that of all previously extracted nodes for the same state. This guarantees monotonic growth in the first dimension, eliminating the need to check dominance for that cost element. 

Leveraging this, \namoadr\ replaces standard dominance checks with more efficient \emph{$t$-discarding} tests (Sec.~\ref{sec:notation}). Instead of maintaining full open and closed sets, \namoadr\ maintains only a \emph{truncated} set of closed nodes for each state, denoted by $\GTcl{s}$. 

Alg.~\ref{alg:soundmvh} illustrates this control flow (red annotations should be ignored at this point). Blue annotations highlight the specific operations where DR replaces standard \namoa\ dominance tests, utilizing the truncated vectors $\Tr(\cdot)$ and the reduced per-state sets $\GTcl{\cdot}$. 

After initialization, nodes are extracted from $\open{}$ in lexicographically non-decreasing $\vecv{f}$ order (Line~\ref{line:alg1-lex-smallest-f}). Due to Property~\ref{prop:dr}, dominance checks at both extraction time and generation time are performed using the truncated vectors $\Tr(\cdot)$ and the reduced per-state sets $\GTcl{\cdot}$. Specifically, an extracted node $n$ is pruned if its truncated cost vector $\Tr(\vecv{g}(n))$ is dominated by $\GTcl{s(n)}$ (Line~\ref{line:alg1-local-dom-check-1}) or if its truncated evaluation vector $\Tr(\vecv{f}(n))$ is dominated by $\GTcl{\sgoal}$ (Line~\ref{line:alg1-global-dom-check-1}). 

When a node passes these $t$-discarding checks upon extraction, its truncated $\vecv{g}$-value is inserted into $\GTclbare$ (Line~\ref{line:ctcl-insertion}), ensuring that subsequent dominance tests can be done using $N{-}1$ dimensions. This same $t$-discarding logic is applied to filter new generated successor nodes (Lines~\ref{line:alg1-local-dom-check-2} and~\ref{line:alg1-global-dom-check-2}).

\section{The Incompatibility of MVH and DR}
\label{sec:incompatibility}
In the previous section, we established that DR via $t$-discarding relies entirely on Property~\ref{prop:dr}: nodes corresponding to the same state must be extracted in lexicographically non-decreasing order of their $\vecv{g}$-values. In the context of \namoadr, this ordering invariant is naturally guaranteed when using a consistent SVH. 

However, this invariant collapses when introducing MVHs. When a state $s$ is associated with a set of heuristic estimates $H(s)$, two nodes~$n$ and $n'$ of the same state can be evaluated using different heuristic vectors, $\vecv{h} \in H(s)$ and $\vecv{h}' \in H(s)$. 
As we will see shortly, in certain settings, $\vecv{f}(n) \lexprec \vecv{f}(n')$ even if $\vecv{g}(n) \lexsucc \vecv{g}(n')$. When extracting nodes by their $\vecv{f}$-values, a node with a larger first-objective cost $g_1$ might be extracted before a node with a smaller $g_1$ cost, violating monotonic growth in the first dimension.

\begin{lemma}
\label{lem:mvh-breaks-prop1}
Property~\ref{prop:dr} does not hold generally for best-first MOS algorithms employing an MVH $H$, even if $H$ is admissible and consistent.
\end{lemma}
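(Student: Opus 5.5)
This is a non-existence statement, so I will prove it with an explicit counterexample: a small bi-objective instance ($N=2$) with an admissible and consistent MVH $H$, run under the natural extension of the best-first scheme of Property~\ref{prop:dr}. Under that scheme, each node carries one $\vecv{f}$-value $\vecv{g}+\vecv{h}$ for some $\vecv{h}\in H(s)$, and \open{} is ordered lexicographically by $\vecv{f}$. I will exhibit two nodes $n,n'$ of the same state $s$ with $n$ extracted first but $g_1(n)>g_1(n')$. This contradicts the conclusion of Property~\ref{prop:dr}.

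\textbf{Construction.} Take states $\sstart, a, b, s, \sgoal$ and edges $\sstart\to a\to s$, $\sstart\to b\to s$, $s\to\sgoal$, plus a second edge $s\to t\to\sgoal$ so that $s$ has two genuine Pareto trade-offs to the goal. Choose costs so that the two paths from $s$ to $\sgoal$ cost $(0,10)$ and $(10,0)$, and set $H(s)=\{(0,10),(10,0)\}$. Choose the two paths from $\sstart$ to $s$ to have costs $\vecv{g}(n)=(2,0)$ via $a$ and $\vecv{g}(n')=(1,5)$ via $b$. Then the candidate $\vecv{f}$-values are
\[
F(s,\vecv{g}(n))=\{(2,10),(12,0)\},\qquad F(s,\vecv{g}(n'))=\{(1,15),(11,5)\}.
\]
The node of $n'$ evaluated with $\vecv{h}=(10,0)$ has $\vecv{f}=(11,5)$. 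The node of $n$ evaluated with $\vecv{h}=(0,10)$ has $\vecv{f}=(2,10)$. If I prefer to force the heuristic choice rather than list all pairings, I can instead route $n'$ through a predecessor whose consistent MVH only admits the $(10,0)$ branch. Ordering \open{} by lex-smallest $\vecv{f}$ then gives $(2,10)\lexprec(11,5)$, so a copy of $n$ is extracted before that copy of $n'$, even though $g_1(n)=2>1=g_1(n')$.

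\textbf{Verifying the hypotheses.} I then define $H$ on every state as the exact Pareto set of costs-to-go: $H(\sgoal)=\{\mathbf 0\}$, $H(t)$ is the cost of $t\to\sgoal$, and $H(a),H(b),H(\sstart)$ are the Pareto sets of their completions. With this choice, admissibility is immediate. Consistency follows because, for any edge $(u,v)$ and any $\vecv{h}\in H(u)$, the vector $\vecv{h}$ is realized by a path through some successor $v$ whose remaining cost $\vecv{h}'$ lies in $H(v)$. For edges of $u$ not on that path, I need to check the condition $\exists\,\vecv{h}'\in H(v)$ with $\vecv{h}\preceq \vecv{c}(u,v)+\vecv{h}'$. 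This can fail for exact Pareto sets, so the fallback is to shrink $H(u)$ to the componentwise minimum over outgoing edges where needed. That shrinking must leave $H(s)$ untouched, and I will adjust edge costs so that it does. I will also confirm that neither node is pruned before extraction: the $\vecv{g}$-values $(2,0)$ and $(1,5)$ are mutually non-dominated, and no solution dominating $(2,10)$ is found earlier.

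\textbf{Main obstacle.} The delicate part is making the extraction order unambiguous while keeping $H$ consistent everywhere. Two things need care. First, the extraction of $n'$'s other copy, with $\vecv{f}=(1,15)$, occurs before $(2,10)$. So the violating pair must be stated per (node, heuristic vector), or that copy must be blocked. I would handle this by tracing the run step by step on the concrete instance and reporting the two extractions of $s$ whose $g_1$ order is reversed. Second, the consistency check on the edges into $s$ must hold; checking this is routine arithmetic on the chosen numbers.
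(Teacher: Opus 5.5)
There is a genuine gap. The two points you defer, the ``main obstacle'' and the consistency check, are exactly where the construction fails, so no verified counterexample results.

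\textbf{Extraction order.} Take the model the paper's proof analyzes: one node $\langle s,\vecv g,F(s,\vecv g)\rangle$ per path, keyed by its lexicographically smallest surviving $\vecv f$-value. With the exact cost-to-go sets you propose, your instance satisfies Property~\ref{prop:dr}. The $\vecv g=(1,5)$ node is popped at $(1,15)$, before the $\vecv g=(2,0)$ node is popped at $(2,10)$, and it is then expanded and gone.

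The comparison $(2,10)\lexprec(11,5)$ gives a violation only if every (path, $\vecv h$) pair is a separate node. Committing to that eager model, and tracing it, would give a legitimate and simpler counterexample to the lemma as stated. Note, though, that it only exhibits the clash between different-$\vecv h$ copies, which \soundmvh handles by keying closed sets on $(s,\vecv h)$. You do not commit to it. Your alternative of forcing $n'$ to bind only to $(10,0)$ is unsupported: admissibility at $b$ already puts some $\vecv h_b\preceq\vecv c(b,s)+(0,10)$ into $H(b)$, and any copy of $b$ carrying it binds to $(0,10)$.

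The missing idea, which the paper's proof supplies, is re-keying triggered by global pruning. The early solution $(3,15)$ $t$-discards the $s_1$ node's smallest value $(9,16)$. That node is re-keyed to $(15,11)$ and falls behind the $s_3$ node with $\vecv g=(10,6)$ at $(11,12)$. Only afterwards does $s_1$ generate the $s_3$ path with $\vecv g=(8,10)$.

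Your instance can be repaired the same way, for example with an edge $\sstart\to\sgoal$ of cost $(0,12)$. Its solution is found first and $t$-discards $(1,15)$ but not $(2,10)$. The $\vecv g=(1,5)$ path then reaches $s$ only after $n$ (with $g_1=2$) has been extracted, just as $(8,10)$ reaches $s_3$ after $(10,6)$.

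\textbf{Consistency.} The paper's stated definition is: for every edge $(u,v)$ and every $\vecv h\in H(u)$ there is $\vecv h'\in H(v)$ with $\vecv h\preceq\vecv c(u,v)+\vecv h'$. Induction along any path then shows that every $s$--$\sgoal$ path must cost at least the componentwise maximum of $H(s)$, i.e.\ $\succeq(10,10)$.

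With costs-to-go $(0,10)$ and $(10,0)$, the violation therefore sits at $s$ itself, already on the edge $s\to\sgoal$, since $(10,0)\not\preceq(0,10)$. So ``leave $H(s)$ untouched and adjust edge costs'' cannot work, and shrinking $H(s)$ destroys the example. Under that definition you would instead drop $t$ and give $s\to\sgoal$ cost $(10,10)$. This keeps $H(s)=\{(0,10),(10,0)\}$ and all your $\vecv f$-values at $s$.

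Under the covering form instead, every vector of $\vecv c(u,v)+H(v)$ must be weakly dominated by some vector of $H(u)$. There, exact Pareto cost-to-go sets are consistent automatically and no shrinking is needed. This is also the form the paper's Table~\ref{tab:simple_example} actually satisfies: for instance, $(11,12)\in H(\sstart)$ is not below $\vecv c(\sstart,\sgoal)=(3,15)$. Either way, you must fix one definition and verify it at $s$ rather than assert it.
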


\begin{proof}
We prove this by counter-example. Consider the bi-objective search graph shown in Fig.~\ref{fig:example_graph}. The search query is~$\sstart \to \sgoal$, and the Pareto-optimal solution set $\Pi^*$ consists of four solution paths (colored in Fig.~\ref{fig:example_graph}, also shown in Fig.~\ref{fig:mvh_vs_svh_origin}): $\pi_1, \pi_2, \pi_3$ and $\pi_4$ with costs $(3,15), (12,13), (16,12)$ and $(18,8)$, respectively.

\begin{figure}[t]
  \centering
  \begin{minipage}{0.49\columnwidth}
    \centering
    \resizebox{\linewidth}{!}{
      \begin{tikzpicture}[
        >=Latex,
        every node/.style={circle, draw=black, thick, minimum size=7mm, inner sep=0pt},
        cost/.style={draw=none, fill=white, inner sep=1pt, font=\small},
        hpath/.style={line width=1.2pt, >={Latex[length=1.8mm,width=1.4mm]}}
      ]
          \node (s) at (0,2.5) {$\sstart$};
          \node (g) at (3.6,2.5) {$\sgoal$};
          \node (s1) at (0,0) {$s_1$};
          \node (s2) at (3.6,0) {$s_2$};
          \node (s3) at (1.8,0) {$s_3$};

        \draw[thick] (s) to node[cost, above] {$(3,15)$} (g);
        \draw[thick] (s) to node[cost, left] {$(7,9)$} (s1);
        \draw[thick] (s1) to node[cost, below] {$(1,1)$} (s3);
        \draw[thick] (s3) to node[cost, below] {$(7,1)$} (s2);
        \draw[thick] (s2) to node[cost, right] {$(1,1)$} (g);
        \draw[thick] (s) to node[cost, sloped, above, pos=0.4] {$(10,6)$} (s3);
        \draw[thick] (s3) to node[cost, sloped, above, pos=0.6] {$(2,7)$} (g);

        \draw[hpath, ->, blue] (s.east) to[out=10,in=170] (g.west);
        \draw[hpath, ->, orange!90!black, dashed] (s.south east) to[out=-28,in=118] (s3.north west);
        \draw[hpath, ->, orange!90!black, dashed] (s3.north east) to[out=62,in=-152] (g.south west);
        \draw[hpath, ->, green!50!black, dotted] (s.south west) to[out=-118,in=62] (s1.north east);
        \draw[hpath, ->, green!50!black, dotted] (s1.east) to[out=0,in=180] (s3.west);
        \draw[hpath, ->, green!50!black, dotted] (s3.east) to[out=-6,in=186] (s2.west);
        \draw[hpath, ->, green!50!black, dotted] (s2.north east) to[out=62,in=-118] (g.south west);

      \draw[hpath, ->, red, dash dot]
        (s.south east) to[out=-42,in=132] (s3.north west);
      \draw[hpath, ->, red, dash dot]
        (s3.east) to[out=-18,in=198] (s2.west);
      \draw[hpath, ->, red, dash dot]
        (s2.north east) to[out=48,in=-132] (g.south west);
      \end{tikzpicture}
    }
    \captionof{figure}{Search graph.}
    \label{fig:example_graph}
  \end{minipage}
  \hfill
  \begin{minipage}{0.49\columnwidth}
    \centering
    \small
    \begin{tabular}{ll}
      \toprule
      State & $H(s)$ \\
      \midrule
      $\sstart$ & \makecell{\{(2,14), (11,12), \\~~~(15,11), (17,7)\}} \\
      $s_1$ & \{(2,7), (8,2)\} \\
      $s_2$ & \{(1,1)\} \\
      $s_3$ & \{(1,6), (7,1)\} \\
      $\sgoal$ & \{(0,0)\} \\
      \bottomrule
    \end{tabular}
    \vspace{12pt}
    \captionof{table}{MVH $H$ values.}
    \label{tab:simple_example}
  \end{minipage}
\end{figure}

Table~\ref{tab:simple_example} specifies an admissible and consistent MVH $H$ for this query (also shown in Fig.~\ref{fig:mvh_vs_svh_origin}). We execute a naive adaptation of \namoadr\ using this MVH. Node entries in \open{} are represented as $n=(s, \vecv{g}, F(s,\vecv{g}))$, where $F(s,\vecv{g}) = \nd\{\vecv{g}+\vecv{h}\mid \vecv{h}\in H(s)\}$. 
The node selected for extraction has the lexicographically-smallest $\vecv{f}$-value among all available $\vecv{f} \in F$ in \open{}. For DR, we maintain a truncated closed-set $\GTcl{s}$ for each state. As this is a bi-objective graph, truncated vectors are scalars (as they contain only the second cost component).

\vspace{0.5\baselineskip}
\begin{table}[t]
\centering
\footnotesize
\setlength{\tabcolsep}{2.5pt}
\renewcommand{\arraystretch}{0.97}
\caption{Execution trace of naively combining \namoadr\ with an admissible and consistent MVH on Figure~\ref{fig:example_graph}.}
\label{tab:naive_trace}
\begin{tabular}{@{} >{\centering\arraybackslash}m{0.025\linewidth} | m{0.535\linewidth} | >{\centering\arraybackslash}m{0.39\linewidth} @{}}
\toprule
It. & \centering $\open{}$ candidates $\langle s,\vecv{g},\vecv{h},\vecv{f}\rangle$ & DR action \\\midrule

0 &
\shortstack[l]{
$\rightarrow \langle \sstart,(0,0),(2,14),(2,14)\rangle$
}
&
$\GTcl{\sstart}\leftarrow 0$ \\[0.4ex]
\hline

1 &
\shortstack[l]{
$\rightarrow \langle \sgoal,(3,15),(0,0),(2,14)\rangle$ \\ \hspace{3.5mm}
$\langle s_1,(7,9),(2,7),(9,16)\rangle$ \\ \hspace{3.5mm}
$\langle s_3,(10,6),(1,6),(11,12)\rangle$
}
&
$\GTcl{\sgoal}\leftarrow15$ \\[0.4ex]
\hline

2 &
\shortstack[l]{
$\rightarrow \langle s_1,(7,9),(2,7),(9,16)\rangle$ \\ \hspace{3.5mm}
$\langle s_3,(10,6),(1,6),(11,12)\rangle$
}
&
$F(s_1,(7,9)) \leftarrow F(s_1,(7,9)) \setminus \{(9, 16)\}$ \\[0.4ex]
\hline

3 &
\shortstack[l]{
$\rightarrow \langle s_3,(10,6),(1,6),(11,12)\rangle$ \\ \hspace{3.5mm}
$\langle s_1,(7,9),(8,2),(15,11)\rangle$
}
&
$\GTcl{s_3}\leftarrow6$ \\[0.4ex]
\hline

4 &
\shortstack[l]{
$\rightarrow \langle \sgoal,(12,13),(0,0),(12,13)\rangle$ \\ \hspace{3.5mm}
$\langle s_1,(7,9),(8,2),(15,11)\rangle$ \\ \hspace{3.5mm}
$\langle s_2,(17,7),(1,1),(18,8)\rangle$
}
&
$\GTcl{\sgoal}\leftarrow13$ \\[0.4ex]
\hline

5 &
\shortstack[l]{
$\rightarrow \langle s_1,(7,9),(8,2),(15,11)\rangle$ \\ \hspace{3.5mm}
$\langle s_2,(17,7),(1,1),(18,8)\rangle$
}
&
\shortstack[c]{
node is pruned, \\ $\GTcl{s_1}\prec\Tr(8,10)$
} \\

\bottomrule
\end{tabular}
\end{table}\vspace{0.5\baselineskip}

Table~\ref{tab:naive_trace} traces the execution. A node trace is denoted as $\langle s, \vecv{g}, \vecv{h}, \vecv{f}_{\min \text{lex}} \rangle$. 
\begin{itemize}
    \item \textbf{Iteration 0:} Extracts $\langle \sstart, (0,0), (2,14), (2,14)\rangle$. We initialize $\GTcl{\sstart} \leftarrow 0$. Successors ($\sgoal, s_1, s_3$) are generated and inserted into \open{}.
    \item \textbf{Iteration 1:} Extracts $\langle \sgoal, (3,15), (0,0), (2,14)\rangle$. $\GTcl{\sgoal}$ updates to $15$.
    \item \textbf{Iteration 2:} Extracts $\langle s_1, (7,9), (2,7), (9,16)\rangle$. Since $\Tr((9,16)) = 16 \succ 15$ (the value of $\GTcl{\sgoal}$), the $\vecv{f}$-value $(9,16)$ is pruned. The next lexicographically-smallest $\vecv{f} \in F(s_1, (7,9))$ is {$(15,11)$}, created with $\vecv{h} = (8,2)$. The updated node remains in \open{} as $\langle s_1, (7,9), (8,2), (15,11)\rangle$.
    \item \textbf{Iteration 3:} Extracts $\langle s_3, (10,6), (1,6), (11,12)\rangle$. $\GTcl{s_3}$ updates to $6$. Successors ($\sgoal, s_2$) are generated.
    \item \textbf{Iteration 4:} Extracts $\langle \sgoal, (12,13), (0,0), (12,13)\rangle$. $\GTcl{\sgoal}$ updates to $13$.
    \item \textbf{Iteration 5:} Re-extracts the updated node $\langle s_1, (7,9), (8,2), (15,11)\rangle$. It generates a successor~$s_3$ with cost $\vecv{g} = (8,10)$. The algorithm checks if $\Tr(8,10) = 10 \prec 6$ (the value of $\GTcl{s_3}$). As it is not, the algorithm incorrectly assumes the path is dominated and prunes it.
\end{itemize}

At Iteration 5, the true full cost vector of the previously closed $s_3$ node is $(10,6)$. The new $\vecv{g}$-value is $(8,10)$. Clearly, $(10,6)$ does \emph{not} weakly dominate $(8,10)$. The pruning is an artifact of the truncated comparison, which assumes $g_1 \ge 10$. However, the new node's $g_1 = 8 < 10$. This proves that lexicographic extraction of $\vecv{f}$-values under an MVH does not guarantee lexicographic expansion of $\vecv{g}$-values, breaking Property~\ref{prop:dr}.
\end{proof}

Because the $t$-discarding framework relies entirely on the invariant established by Property~\ref{prop:dr}, applying it without modification under an MVH causes the search to incorrectly prune Pareto-optimal paths (as demonstrated in Iteration~5), breaking completeness and optimality. Preserving correctness therefore requires re-establishing suitable ordering invariants, which motivates the design of the safe search frameworks presented in the following sections.

\ignore{
Because the $t$-discarding framework relies on the invariant established by Property~\ref{prop:dr}, applying it without modification under an MVH causes the search to prune non-dominated paths, breaking completeness and optimality.

\begin{theorem}
\oldlabel{old:thm:badexample}
Naively applying dimensionality reduction within the \namoadr\ framework when using a multi-valued heuristic does not yield a sound algorithm.
\end{theorem}
\begin{proof}
Follows directly from Lemma~\ref{lem:mvh-breaks-prop1}. As demonstrated in Iteration 5 of the execution trace (Table~\ref{tab:naive_trace}), the violation of lexicographic $\vecv{g}$-value ordering causes the algorithm to incorrectly $t$-discard the Pareto-optimal path $\pi_4 = (\sstart, s_1, s_3, s_2, \sgoal)$ (which passes through $s_3$ with cost $(8,10)$), rendering the search incomplete.
\end{proof}

This incompatibility establishes that preserving correctness requires re-establishing suitable ordering invariants when utilizing MVH alternatives. We therefore turn to the design of search frameworks that enable safe DR while maintaining soundness under MVHs.
}
\ignore{
\OS{OLD TEXT}

In this section we show that naively combining dimensionality reduction with multi-valued heuristics (MVHs) may lead to unsound pruning, by naively attempting to use MVHs within \namoadr framework.

Previously, we showed that to perform dimensionality reduction, \namoadr\ relies on a key ordering invariant \emph{monotonic growth} (Section~\ref{notation:monotonic-growth}) induced by the combination of a consistent SVH and lexicographic extraction of nodes by $\vecv f$-values from \open{}. We also showed that nodes are extracted from \open{} in lexicographically non-decreasing order, and that because \namoadr\ is limited to consistent SVH, nodes of the same state are extracted in lexicographically non-decreasing order of $\vecv g$-values. However, the last logic does not apply for MVHs, since $\vecv h$-values are no longer the same among nodes of the same states.

Generally speaking, by allowing two nodes $n, n'$ of the same state to have different $\vecv h$-values, then it is also possible that $\vecv{g}(n')\lexprec \vecv{g}(n)$ while $\vecv{f}(n)\lexsucc\vecv{f}(n')$. 
When extracting nodes by $\vecv f$-values, nodes with larger first-objective $g_1$ costs may be extracted before smaller ones.


To demonstrate the problem, we will use \namoa's presentation for \open{} node entries $n=(s,\vecv{g}, F(s,\vecv{g}))$, where $F(s,\vecv{g}) = \nd\{\vecv{g}+\vecv{h}\mid \vecv{h}\in H(s)\}$. The node selected for extraction from \open{} is the node with lexicographically-smallest $\vecv f$-value among all of currently $\vecv f \in F$ in \open{}. Like \namoadr, to perform dominance checks with DR, we will maintain for each state $s$ a \emph{truncated} closed-group of nodes $\GTcl{s}$.

\input{figures/example_graph.tex}

Figure~\ref{fig:example_graph} shows a bi-objective search graph. Given the query $\sstart \to \sgoal$, the Pareto frontier consists of four solution paths:
$\pi_1=(\sstart,\sgoal)$ with $\mathbf{c}(\pi_1)=(3,15)$,
$\pi_2=(\sstart,s_3,\sgoal)$ with $\mathbf{c}(\pi_2)=(12,13)$,
$\pi_3=(\sstart,s_1,s_3,s_2,\sgoal)$ with $\mathbf{c}(\pi_3)=(16,12)$, and
$\pi_4=(\sstart,s_3,s_2,\sgoal)$ with $\mathbf{c}(\pi_4)=(18,8)$.

\input{tables/simple_example.tex}

Table~\ref{tab:simple_example} specifies an admissible and consistent multi-valued heuristic $H$ for the query $\sstart \to \sgoal$, along with the corresponding SVH, given as the ideal-point heuristic $\hip{}$
For every state $s$, $H(s)$ gives the exact Pareto frontier~$\Pi^*$ of cost vectors from $s$ to $\sgoal$.


Table~\ref{tab:naive_trace} illustrates the attempt of naively combining \namoadr\ with $H$. Each iteration represents an extraction of a node. The table shows a trace of \open{} with an arrow that signs at the node chosen for extraction. Beside it, a description of the taken action at the current iteration.

For convenience, instead of tracing the full $F$ group, we write for every node only its lexicographically smallest $\vecv f\in F$. We also added to the trace of each node the $\vecv h$ component that made its lexicographically-smallest $\vecv f$. To conclude, a trace of a node in the table is given as $\langle s, \vecv g, \vecv h, \min_{lex} \{ \vecv f | {\vecv{f}\in F(s, \vecv g)} \} \rangle$, where $\vecv h + \vecv g= \vecv f_{\min lex}$.

In addition, because this is a bi-objective graph, the truncated vectors are of size 1. This allows us to simplify the demonstration and to treat each closed-set $\GTcl{s}$ as a scalar instead of a set.

\paragraph{Trace overview.}
\begin{itemize}

\item Iteration~0 extracts node $\langle \sstart,(0,0),(3,15),(3,15)\rangle$.  
$\sstart$ closed-set $\GTcl{\sstart}=\infty$ is updated with $\GTcl{\sstart}\leftarrow0$, since $\Tr((0,0)) = 0 \prec \infty$.
Successor nodes with states $\sgoal$, $s_1$, and $s_3$ are generated and inserted into $\open{}$.

\item Iteration~1 extracts node $\langle \sgoal,(3,15),(3,15),(3,15)\rangle$ that was generated in iteration~0.
$\sgoal$ closed-set $\GTcl{\sgoal}=\infty$ is updated with $\GTcl{\sgoal}\leftarrow15$, since $\Tr((3, 15)) = 15 \prec \infty$.

\item Iteration~2 extracts node $\langle s_1,(7,9),(3,8),(10,17)\rangle$ that was generated in iteration~0.
Since $\Tr((10,17))=17$ and $\GTcl{\sgoal}=15$, $(10,17)$ is removed from $F(s_1, (7, 9))$ due to pruning. The next lexicographically-smallest $\vecv f\in F(s_1, (7,9))$ is $\vecv f$-value $(16, 12)$ that was created with $\vecv h = (9,3)$. The node remains in \open{} and now has a trace $\langle s_1,(7,9),(9,3),(16,12)\rangle$.

\item Iteration~3 extracts node $\langle s_3,(10,6),(2,7),(12, 13)\rangle$ that was generated in iteration~0. $s_3$ closed-set $\GTcl{s_3}=\{\infty\}$ is updated with $\GTcl{s_3}\leftarrow6$, since $\Tr((10, 6)) = 6 \prec \infty$.
Successor nodes with states $\sgoal$ and $s_2$ are generated and inserted into $\open{}$.

\item Iteration~4 extracts node $\langle \sgoal,(12,13),(0,0),(12,13)\rangle$ that was generated in iteration~3.
$\sgoal$ closed set $\GTcl{\sgoal}=15$ is updated with $\GTcl{\sgoal}\leftarrow13$, since $\Tr((12, 13)) = 13 \prec 15$.

\item Iteration~5 extracts node $\langle s_1,(7,9),(9,3),(16,12)\rangle$ that was last updated in iteration~2.
When attempting to generate a new node $s_3$ with $\vecv{g}=(8,10)$, it holds that $\Tr(8,10) = 10 \prec \GTcl{s_3}=6$. Therefore, the node is discarded due to pruning.

\end{itemize}

In the last iteration, $\langle s_1,(7,9),(9,3),(16,12)\rangle$ is discarded because $\Tr(8,10) = 10 \prec \GTcl{s_3}=6$. However, the vector $(8,10)$ cannot actually be $t$-discarded by the content of the original full vectors that make the closed-set $\GTcl{s_3}$. If the closed-set wasn't truncated, then $\mathbf{G}_{\mathbf{cl}}(s_3) = (10, 6)$. However, $(10, 6) \not \prec (8,10)$. 


\begin{theorem}
\label{thm:badexample}
Naively applying dimensionality reduction within \namoadr\ when using a multi-valued heuristic generally does not yield a sound algorithm.
\end{theorem}
\begin{proof}[\upshape\textbf{Proof}]
The trace in Table~\ref{tab:naive_trace} provides a counter example.
\end{proof}

We finished demonstrating that preserving correctness requires re-establishing suitable ordering invariants in the presence of MVHs alternatives. We therefore turn to the design of search frameworks that enable dimensionality reduction while maintaining soundness under multi-valued heuristics.
}

\section{\soundmvh}
\label{sec:namoadr_mvh}

As proven in Sec.~\ref{sec:incompatibility}, the variability of heuristic estimates within an MVH destroys the ordering invariant required for DR. However, if we group nodes not merely by their state $s$, but by the specific state-heuristic pair $(s, \vecv{h})$, we can theoretically isolate the variance and restore the invariant. 

To achieve this, we can force an MVH to behave locally like an SVH. When a node $n$ expands state $s$, instead of evaluating the successor $s'$ with the entire set $H(s')$, the algorithm must generate a \emph{distinct} successor node for every valid $\vecv{h}' \in H(s')$. By assigning a separate search node for each heuristic value, the search space explodes into a Cartesian product of all non-dominated $\vecv{g}$-values and all available $\vecv{h}$-values for each state. 

Unfortunately, splitting the nodes is not enough and we must also ensure that the sequence of heuristic estimates chosen along a path preserves the monotonic growth of evaluation vectors. To recover the monotonic growth required for DR, the search must enforce \emph{path-consistent heuristic selection} (\soundmvhdefinitionnote). This means that when a node $n$ generates a successor $n'$, it can only bind to an $\vecv{h}' \in H(s')$ that satisfies the consistency triangle inequality: $\vecv{h}(n) \preceq \vecv{c}(s,s') + \vecv{h}'$.

If the MVH is consistent, and the algorithm enforces this path-consistent selection, we can formally guarantee that $t$-discarding becomes safe again.

\begin{theorem}[Restored $\vecv{g}$-value ordering]
\label{thm:restored-g-order}
Assume $H$ is a consistent MVH and the algorithm employs path-consistent heuristic selection. Let $n$ and $n'$ be two extracted nodes that share the same state $s$ and the exact same selected heuristic vector $\vecv{h}$. If $n$ is extracted before $n'$, then $\vecv{g}(n) \le_{\rm lex} \vecv{g}(n')$.
\end{theorem}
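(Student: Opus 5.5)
The plan is to reduce the claim to the standard single-valued argument behind Property~\ref{prop:dr}, with the node's bound heuristic vector playing the role of $\vecv h(s)$. Each search node $m$ carries a single heuristic vector $\vecv h(m)$ and the evaluation $\vecv f(m)=\vecv g(m)+\vecv h(m)$. For $n$ and $n'$, which share the state $s$ and the vector $\vecv h$, we have $\vecv f(n)-\vecv f(n')=\vecv g(n)-\vecv g(n')$. Hence $\vecv f(n)\le_{\rm lex}\vecv f(n')$ holds if and only if $\vecv g(n)\le_{\rm lex}\vecv g(n')$, since adding a common vector preserves lexicographic order. It therefore suffices to show that extracted nodes leave \open{} in lexicographically non-decreasing order of $\vecv f$. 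More precisely, we need that whenever $n$ is extracted before $n'$, we have $\vecv f(n)\le_{\rm lex}\vecv f(n')$.

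The first ingredient is a monotonicity lemma along generated edges. Suppose $m$ generates $m'$ via the edge $(s,s')$ with $\vecv h'=\vecv h(m')$, where $\vecv h'$ is chosen path-consistently, so that $\vecv h(m)\preceq \vecv c(s,s')+\vecv h'$. Then
\[
\vecv f(m)=\vecv g(m)+\vecv h(m)\preceq \vecv g(m)+\vecv c(s,s')+\vecv h'=\vecv f(m').
\]
Weak dominance implies $\vecv f(m)\le_{\rm lex}\vecv f(m')$. Consistency of $H$ is what makes such an $\vecv h'$ exist for every $\vecv h(m)$, so path-consistent selection never leaves a path without an admissible continuation. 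It is not needed for the inequality itself.

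The second ingredient is the usual best-first invariant: the sequence of extracted $\vecv f$-values is lexicographically non-decreasing. I would prove this by induction on iterations. When $m$ is popped, it is lex-minimal in \open{}. Every node inserted afterwards is a descendant of some node popped at or after that time, so by the edge lemma its $\vecv f$ is $\ge_{\rm lex}$ that of its parent. By induction, the parent's $\vecv f$ is $\ge_{\rm lex}\vecv f(m)$. Nodes already in \open{} are $\ge_{\rm lex}\vecv f(m)$ by minimality.

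I expect the main obstacle to be ensuring that nodes carry fixed $\vecv f$-values. In the naive trace, nodes had their $F$-sets modified in place (Iteration~2), which effectively re-inserts a node with a larger $\vecv f$. Under the split-node scheme this is not an issue, because each $(\vecv g,\vecv h)$ pair is a separate node whose $\vecv f$ never changes. Global pruning only deletes nodes, and deletion cannot break the monotone extraction order. I would state this explicitly, noting that pruning at generation or extraction time removes nodes but never alters the $\vecv f$ of a surviving node. Combining the invariant with the $\vecv g$/$\vecv f$ equivalence for equal $\vecv h$ then yields $\vecv g(n)\le_{\rm lex}\vecv g(n')$.
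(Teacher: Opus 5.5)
Your proposal is correct and follows the paper's proof essentially step for step: edge-wise monotonicity $\vecv f(m)\preceq\vecv f(m')$ from path-consistent selection, then lexicographically non-decreasing extraction of $\vecv f$-values, then cancelling the common $\vecv h$ to transfer the order to $\vecv g$. Your two extra remarks make explicit assumptions the paper's argument uses only implicitly: that consistency is needed only to guarantee a valid $\vecv h'$ exists, and that split nodes keep fixed $\vecv f$-values because pruning only deletes them.
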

\emph{\soundmvhproofnote}

\paragraph{Limitations of \soundmvh.}
Thm.~\ref{thm:restored-g-order} 
implies that $t$-discarding remains sound if the algorithm maintains a separate local truncated closed-set $G_{\mathrm{cl}}^{Tr}(s, \vecv{h})$ for each valid $\vecv{h} \in H(s)$ 
(as shown in the red annotations of Algorithm~\ref{alg:soundmvh}). However, while theoretically sound, this framework is practically crippled by three major limitations:
\begin{enumerate}
    \item[\textbf{L1}] \textbf{State-Space Explosion:} As established, generating a successor for every consistent $\vecv{h}'$ forces the algorithm to generate an overwhelming number of redundant nodes representing the exact same physical path, severely cluttering \open{}.
    \item[\textbf{L2}] \textbf{Fragmentation of Pruning Power:} By partitioning the closed-sets by $\vecv{h}$, the algorithm performs dominance checks against much smaller sets, drastically reducing the likelihood of successful $t$-discarding.
    \item[\textbf{L3}] \textbf{The Consistency Burden:} Constructing tight, strictly consistent MVHs is both difficult and computationally expensive (see also Sec.~\ref{sec:consistency-fixed}).
\end{enumerate}

We therefore seek an alternative framework that avoids the Cartesian product explosion entirely, preserves the efficiency of sound DR, and only requires \emph{admissible} MVHs.

\ignore{
As proven in Section~\ref{sec:incompatibility}, the variability of heuristic estimates within an MVH destroys the ordering invariant required for DR. However, if we group nodes not merely according to their state $s$, but according to state-heuristic pairs $(s, \vecv{h})$, we can isolate the variance and restore the invariant.
In this section, we formalize this idea and suggest an adaptation of \namoadr that can use MVHs. 
To achieve this, the search algorithm must guarantee that a path maintains a consistent heuristic sequence from start to goal. We formalize this requirement as follows:

\begin{definition}[Path-Consistent Heuristic Selection]
\oldlabel{old:def:mvh-sound}
A MOS algorithm employs path-consistent heuristic selection if, whenever a node $n$ with state $s$ generates a successor $n'$ with state $s'$, the chosen heuristic vector $\vecv{h}(n') \in H(s')$ satisfies $\vecv{h}(n) \preceq \vecv{c}(s,s') + \vecv{h}(n')$.
\end{definition}

This allows us to recover the monotonicity required for~DR.
\begin{lemma}[$\vecv{f}$-value monotonicity along a path]
\oldlabel{old:lem:f-monotonicity}
Assume~$H$ is a consistent MVH and the algorithm employs path-consistent heuristic selection. Then along any generated path, the sequence of evaluation vectors $\vecv{f} = \vecv{g} + \vecv{h}$ is component-wise non-decreasing.
\end{lemma}

\begin{proof}
By Def.~\ref{def:mvh-sound}, every generated successor node $n'$ with state $s'$ satisfies $\vecv{h}(n) \preceq \vecv{c}(s,s') + \vecv{h}(n')$. Adding $\vecv{g}(n)$ to both sides, and using the fact that $\vecv{g}(n') = \vecv{g}(n) + \vecv{c}(s,s')$, yields $\vecv{g}(n) + \vecv{h}(n) \preceq \vecv{g}(n') + \vecv{h}(n')$, which is $\vecv{f}(n) \preceq \vecv{f}(n')$.
\end{proof}

\begin{lemma}
\oldlabel{old:lem:lex-f-extraction}
Assume $H$ is a consistent MVH and the algorithm employs path-consistent heuristic selection. Then, nodes are extracted from \open{} in lexicographically non-decreasing order of their $\vecv{f}$-values.
\end{lemma}

\begin{proof}
Each extraction removes the lexicographically smallest $\vecv{f}$ from \open{}. By Lemma~\ref{lem:f-monotonicity}, every newly generated successor node has an $\vecv{f}$-value that is lexicographically greater than or equal to that of its parent. Therefore, any later insertion into \open{} cannot violate the extraction order.
\end{proof}

By forcing the algorithm to be path-consistent, we guarantee that among nodes sharing the exact same state \emph{and} heuristic vector, the $\vecv{g}$-values are extracted in the correct order for $t$-discarding.

\begin{theorem}[Restored $\vecv{g}$-value ordering]
\oldlabel{old:thm:restored-g-order}
Assume $H$ is a consistent MVH and the algorithm employs path-consistent heuristic selection. Let $n$ and $n'$ be two extracted nodes that share the same state $s$ and the exact same selected heuristic vector $\vecv{h}$. If $n$ is extracted before $n'$, then $\vecv{g}(n) \le_{\rm lex} \vecv{g}(n')$.
\end{theorem}

\begin{proof}
By Lemma~\ref{lem:lex-f-extraction}, since $n$ is extracted before $n'$, we know $\vecv{f}(n) \le_{\rm lex} \vecv{f}(n')$. Because both nodes share the exact same heuristic vector $\vecv{h}$, subtracting $\vecv{h}$ component-wise from both evaluation vectors preserves the lexicographic order: $\vecv{g}(n) = \vecv{f}(n) - \vecv{h} \le_{\rm lex} \vecv{f}(n') - \vecv{h} = \vecv{g}(n')$.
\end{proof}

Theorem~\ref{thm:restored-g-order} implies that $t$-discarding remains sound under an MVH, provided that the closed-sets are maintained per state-heuristic pair rather than strictly per state. Alg.~\ref{alg:soundmvh} highlights the two necessary modifications (indicated by the red annotations) to achieve this:
\begin{enumerate}
    \item Instead of maintaining a single closed-set $\GTcl{s}$ for each state, the algorithm maintains a separate local truncated closed-set $\GTcl{s, \vecv h}$ for each valid $\vecv{h} \in H(s)$ (lines~\ref{line:alg1-local-dom-check-1}, \ref{line:alg1-global-dom-check-1}, \ref{line:alg1-local-dom-check-2}, \ref{line:alg1-global-dom-check-2}).
    \item To satisfy Def.~\ref{def:mvh-sound}, a node $n$ expanding state $s$ must generate a distinct successor node $n'$ for \emph{every} valid $\vecv{h}' \in H(s')$ that satisfies $\vecv{h}(n) \preceq \vecv{c}(s,s') + \vecv{h}'$ (Line 19).
\end{enumerate}

\paragraph{Limitations of \soundmvh.}
While this framework successfully restores the ordering guarantees required for safe DR, it is practically crippled by three major limitations:

\begin{itemize}
    \item \textbf{Fragmentation of Pruning Power:} By partitioning the closed-sets by $\vecv{h}$, we may dilute the effectiveness of $t$-discarding. The algorithm performs more dominance checks against much smaller sets, reducing the likelihood of successful pruning.
    \item \textbf{State-Space Explosion:} Because a parent node must generate a successor for \emph{every} consistent $\vecv{h}' \in H(s')$, the algorithm inherently generates multiple nodes representing the exact same path, leading to a potential redundancy in \open{}.
    \item \textbf{The Consistency Burden:} The entire framework relies on the availability of a consistent MVH. Constructing tight, consistent MVHs is notoriously difficult and computationally expensive (e.g., see the complex domain-dependent constructions required in classical planning~\cite{geisser_admissible}).
\end{itemize}

We therefore seek an alternative framework that preserves the efficiency of sound DR, avoids redundant path generation, and, most importantly, requires only the \emph{admissibility} of the MVH.

}

\section{\frankenmvh: Lazy DR}
\label{sec:frankenmvh}

We now present our primary algorithmic contribution, \frankenmvh\ (where the ``L'' explicitly stands for \emph{Lazy}). This framework enables the sound use of DR together with MVHs while dropping the consistency requirement. It requires only the \emph{admissibility} of the MVH. 

The key idea is to use DR optimistically: assume the required ordering invariants hold, dynamically detect when they are violated, and fall back to a full Pareto dominance test only when absolutely necessary. Similar to \namoadr, the algorithm maintains a priority queue \open{} sorted by lexicographically-smallest $\vecv{f}$-values. However, it maintains \emph{two} closed-sets for every state $s$: (i) a truncated closed-set $\GTcl{s}$ for fast $t$-discarding, and (ii) a full non-dominated set $G_{\mathrm{cl}}(s)$ of discovered $\vecv{g}$-values, serving as a correctness-critical fallback.

\begin{algorithm}[t!]
  \caption{\frankenmvh{}}
  \label{alg:cfs}
  \begin{algorithmic}[1]
    \Statex \textbf{Input:} A MOS instance $(S,E,\vecv c,\sstart,\sgoal)$;
    \Statex \hspace{10mm} an admissible MVH $H$.
    \Statex \textbf{Output:} A cost-unique Pareto-optimal solution set $\text{Sols}$.

    \vspace{1mm}

    \State $\text{Sols}\gets\emptyset$
    \ForAll{$s\in S$}
      \State ${\GTcl{s}\gets \emptyset}; G_{\mathrm{cl}}(s)\gets\emptyset$
    \EndFor
    
    \State $n\gets$ new node with $s(n)=\sstart$
    \State $\text{parent}(n)\gets \text{NULL}$; 
    $\vecv g(n)\gets \vecv 0$; 
            $\vecv f(n)\gets H_{lex}^{\min}(\sstart)$ 

    \State $\open \leftarrow \{n\}$
    \While{$\open\ \neq \emptyset$}
    \State $n \leftarrow \open{}.\text{Pop}$
    \If{$\exists\, \vecv f_{\rm goal} \in \GTcl{\sgoal}~s.t.~
        \vecv f_{\rm goal} \preceq \Tr(\vecv{f}(n))$} \label{line:alg2-reinsertion}
    \State $\vecv{h}' \leftarrow
      \textsc{ChooseH}(s(n), \vecv g(n),\GTcl{\sgoal})$ \label{line:alg2-chooseh-1}
    \If{$h' \neq \bot$} \label{alg2:chooseh-bot} 
    \State $\vecv h(n) \leftarrow \vecv h'$;
    \State $\vecv{f}(n) \leftarrow \vecv{g}(n) + \vecv{h}'$
    \State $\open.\text{Insert}(n)$
    \EndIf     
    \Continue
    \EndIf
    \If{\Call{LocalDomCheck}{$s(n), \vecv{g}(n)$}} \label{line:alg2-local-dom}
    \Continue
    \EndIf
    \State $\GTcl{s(n)} \leftarrow
      \!\left(\GTcl{s(n)} \cup \{\Tr({\vecv{g}(n)})\}\right)$;
    \State $G_\text{cl}(s(n)) \leftarrow
      G_\text{cl}(s(n)) \cup \{\vecv{g}(n)\}$
    \If{$s(n) = \sgoal$} \label{line:alg2-sols-update}
    \State $\sols \leftarrow \sols \cup \{n\}$
    \Continue
    \EndIf
    \ForAll{$s' \in \mathrm{Succ}(s)$}
    \State $n'\gets$ a new node with $s(n')=s'$ \label{line:alg2-new-successor}
    \State $\mathrm{parent}(n')\gets n$; 
           $\vecv g(n')\gets \vecv g(n)+\vecv c(s(n),s')$
    \State $\vecv{h}' \leftarrow
      \textsc{ChooseH}(s', \vecv{g}(n'), \GTcl{\sgoal})$ \label{line:alg2-chooseh-2}
    \If{$\vecv{h}' = \bot$} \label{line:alg2-bot-2}
    \Continue
    \EndIf
    \If{\Call{LocalDomCheck}{$s', \vecv{g}(n')$}} \label{line:alg2-local-pruning}
    \Continue
    \EndIf
    \State $\vecv f(n')\gets \vecv g(n')+ \vecv h'$
    \State $\open.\text{Insert}(n')$
    \EndFor
    \EndWhile
    \State \Return $\sols$
    \vspace{-2mm}
    \Statex
    \Function{ChooseH}{$s,\vecv g, T$} \label{line:alg2-choose-h}
    \ForAll{$\vecv h \in H(s)$ in lexicographic order}
        \State $\vecv f \gets \vecv g+\vecv h$
        \If{$T \preceq \Tr(\vecv f)$} \Comment{$t$-discarding}
            \State \textbf{continue}
        \EndIf
        \State \Return $\vecv h$
    \EndFor
    \State \Return $\bot$
\EndFunction
    \vspace{-2mm}
    \Statex
    \Function{LocalDomCheck}{$s,\vecv g$} \label{line:alg2-local-dom-check-def}
  \If{$\exists\,\vecv g' \in \GTcl{s}:
      \Tr(\vecv g') \preceq \Tr(\vecv g)$} \label{line:alg2-t-discarding}
    \If{$\max_{\vecv g' \in \GTcl{s}} g'_1 \le g_1$} \Comment{$t$-discarding} \label{line:alg2-t-discarding-violation}
      \State \Return \textbf{true}
    \EndIf
    \If{$\exists\,\vecv g' \in G_{\mathrm{cl}}(s): \vecv g' \preceq \vecv g$} \label{line:alg2-fallback}
      \State \Return \textbf{true}
    \EndIf
  \EndIf
  \State \Return \textbf{false}
\EndFunction
  \end{algorithmic}
\end{algorithm}

\subsection{Algorithmic Mechanics: The Node Lifecycle}
To understand how \frankenmvh\, outlined in Alg.~\ref{alg:cfs}, overcomes the state-space explosion (Sec.~\ref{sec:namoadr_mvh},~\textbf{L1}), we trace the lifecycle of a node as it is generated, extracted, and potentially re-evaluated.

\paragraph{1. Generation \& Lazy Binding (\textsc{ChooseH}).}
Instead of eager node-splitting (which creates a Cartesian product of $\vecv{g}$-values and $\vecv{h}$-values), \frankenmvh\ generates \emph{one} search node $n=(s, \vecv{g}, \vecv{f})$ per physical path. When a successor is generated, it is evaluated \emph{lazily}. 
The \textsc{ChooseH} function (Line~\ref{line:alg2-choose-h}) scans $H(s)$ and binds the path to the lexicographically-smallest $\vecv{h}$-value that is not already $t$-discarded by the current known solutions. This collapses the combinatorial explosion back to a single node in \open{}.

\paragraph{2. Extraction \& Lazy Re-evaluation.}
If an extracted node is later found to be globally dominated by a newly discovered solution, it is not immediately discarded. Instead, the algorithm performs a \emph{lazy re-evaluation} (Line~\ref{line:alg2-reinsertion}): it calls \textsc{ChooseH} again to find the \emph{next} best non-dominated $\vecv{h}$-value and re-inserts the updated node into \open{}. This simulates exploring all valid heuristics without inserting multiple copies of the same path into the queue.

\paragraph{3. Optimistic DR \& Fallback (\textsc{LocalDomCheck}).}
As \frankenmvh\ does not require a consistent MVH, the first objective is not guaranteed to grow monotonically. Lexicographic extraction of $\vecv{f}$-values might expand nodes out of $\vecv{g}$-value order, making naive $t$-discarding unsafe.

To resolve this, the \textsc{LocalDomCheck} function (Line~\ref{line:alg2-local-dom-check-def}) first optimistically checks if $\Tr(\vecv{g})$ is dominated by $\GTcl{s}$. If it is, the algorithm validates the invariant: it checks whether the new node's $g_1$ is greater than or equal to the maximum $g_1$ seen so far for state $s$. If $g_1 \ge \max g'_1$, the monotonic growth assumption holds locally, $t$-discarding is sound, and the node is safely pruned. However, if $g_1 < \max g'_1$, a sequence violation is detected. The algorithm falls back to a standard, full Pareto dominance check against~$G_{\mathrm{cl}}(s)$, pruning the node only if it is truly dominated.

\subsection{Theoretical Guarantees}
Correctness of \frankenmvh\ relies on the soundness of these lazy evaluations and fallback mechanisms. As \textsc{ChooseH} only discards heuristic evaluations that are globally pruned by known solutions, and \textsc{LocalDomCheck} detects local ordering violations using full dominance checks, the algorithm never incorrectly prunes a Pareto-optimal path.

\begin{theorem}[Correctness of \frankenmvh]
\label{thm:frankenmvh-correctness}
If $H$ is an admissible MVH then \frankenmvh\ returns the complete Pareto-optimal solution set $\Pi^*$.
\end{theorem}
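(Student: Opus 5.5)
The proof splits into \emph{soundness} (every node added to $\sols$ is Pareto-optimal, and costs are unique) and \emph{completeness} (every cost in $\Pi^*$ is eventually added). Both rest on three lemmas about the pruning points of Alg.~\ref{alg:cfs}: local pruning, global pruning inside \textsc{ChooseH}, and lazy re-evaluation. After these, the standard best-first invariant argument from \namoa{}/\namoadr{} applies.

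\emph{Lemma 1 (\textsc{LocalDomCheck} is safe).} If \textsc{LocalDomCheck}$(s,\vecv g)$ returns true, some closed $\vecv g'\in G_{\mathrm{cl}}(s)$ satisfies $\vecv g'\preceq\vecv g$. There are two branches. In the fallback branch (Line~\ref{line:alg2-fallback}) this is immediate. In the $t$-discarding branch there is a closed $\vecv g'$ with $\Tr(\vecv g')\preceq\Tr(\vecv g)$, and moreover $g'_1\le\max g''_1\le g_1$. Together these give $\vecv g'\preceq\vecv g$. Hence any suffix continuation of the pruned path is matched by the same suffix appended to the closed path, whose cost is weakly no worse. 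No ordering invariant is needed, which is exactly why the lazy check works with admissibility alone.

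\emph{Lemma 2 (goal extraction order and global pruning).} First, admissibility forces $\vecv 0\in H(\sgoal)$, and since $H(\sgoal)$ is non-dominated, $H(\sgoal)=\{\vecv 0\}$; so goal nodes have $\vecv f=\vecv g$. Next, consider any node $n$ bound to $\vecv h(n)$ on a prefix of a solution path $\pi$. Let $\vecv h^\ast\in H(s(n))$ be the vector that weakly dominates the suffix cost of $\pi$. If $\vecv h^\ast$ has not been skipped, then $\vecv h(n)\le_{\rm lex}\vecv h^\ast$, so
\[
\vecv f(n)\le_{\rm lex}\vecv g(n)+\vecv h^\ast\le_{\rm lex}\vecv c(\pi).
\]
From this I would prove the invariant that goal nodes are extracted in lexicographically non-decreasing $\vecv g$ order, exactly as in Property~\ref{prop:dr} restricted to $\sgoal$. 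Consequently, $\GTcl{\sgoal}\preceq\Tr(\vecv v)$ certifies that some found solution weakly dominates $\vecv v$, provided $v_1$ is at least that solution's first component.

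This proviso is the \textbf{main obstacle}. When \textsc{ChooseH} skips $\vecv h$ because $T\preceq\Tr(\vecv g+\vecv h)$, I must show that the solution $\vecv u$ realising this satisfies $u_1\le g_1+h_1$. My first attempt is to use the extraction order. When $\vecv u$ was extracted, every node in \open{}, and hence (by the displayed chain) every pending prefix of a not-yet-found solution, had $\vecv f\ge_{\rm lex}\vecv u$. In particular $f_1\ge u_1$. Since $\vecv h$ is scanned in lexicographic order, the skipped $\vecv h$ has $\vecv g+\vecv h\ge_{\rm lex}$ the previous binding, which carries the bound $f_1\ge u_1$ forward. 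The difficulty is successors generated after $\vecv u$ was extracted, whose $f_1$ could dip below $u_1$ under an inconsistent $H$. For these I would argue via $\pi$'s ancestor: it was in \open{} when $\vecv u$ was extracted and was bound to $\vecv h^\ast$ or earlier, so $\vecv c(\pi)\ge_{\rm lex}\vecv u$. If $c_1(\pi)<u_1$ were possible, this would contradict that ancestor's extraction order. I expect this step to need the most care, possibly requiring a case split on whether $c_1(\pi)=u_1$.

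\emph{Completeness and soundness.} Fix $\vecv c^\ast\in\Pi^*$ and a path $\pi$ realising it. I would show the invariant that, until a node of cost $\vecv c^\ast$ enters $\sols$, some node $n$ satisfies one of two conditions: either $n$ lies on a path whose cost is weakly no worse than $\pi$'s along the remaining suffix (a path obtained from $\pi$ by swapping in closed dominating prefixes via Lemma~1), is in \open{}, and has its binding not past $\vecv h^\ast$; or a closed path at the same state has a $\vecv g$ that weakly dominates $\vecv g(n)$ and is itself covered by the invariant. Preservation under Lemma~1, Lemma~2 and re-evaluation (Line~\ref{line:alg2-reinsertion}, which only advances past $\vecv h$'s already shown prunable) gives completeness, since $\vecv c^\ast$ is non-dominated and so cannot be globally pruned. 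Soundness follows because a non-optimal goal node is extracted only after a node dominating it: the goal order of Lemma~2 puts the dominating node first, after which it is $t$-discarded by $\GTcl{\sgoal}$ or \textsc{LocalDomCheck} at $\sgoal$. Termination holds because paths are finitely many up to dominance (non-negative costs), and each re-insertion strictly advances the index of $\vecv h(n)$ within the finite set $H(s)$.
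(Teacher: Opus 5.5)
Your plan is correct in outline, and it is a more careful route than the paper's. The paper proves two pruning lemmas: \textsc{ChooseH} returning $\bot$ is safe, and both branches of \textsc{LocalDomCheck} are safe. It then simply asserts that, since the search is best-first and $H$ is admissible, every Pareto-optimal cost reaches $\sols$ and the output is exactly $\Pi^*$. Your Lemma~1 is essentially the paper's \textsc{LocalDomCheck} lemma, stated more precisely: a true return certifies an actual closed $\vecv g'\weakdom\vecv g$.

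The real divergence is global pruning. The paper's \textsc{ChooseH} lemma passes from ``$t$-discarded by $\GTcl{\sgoal}$'' to ``dominated by already discovered solution costs'' without justification. It silently assumes exactly the first-component proviso you isolate, which is not automatic once $H$ is inconsistent. The paper also never shows that dominated goal nodes stay out of $\sols$, nor that the search terminates. Your invariant-based completeness, your explicit soundness step and your observation $H(\sgoal)=\{\vecv 0\}$ give a proof that actually covers the inconsistent case, at the price of length.

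For the obstacle itself, drop the intermediate target $u_1\le g_1+h_1$, because it is false in general. A successor generated after $\vecv u$ and bound to an uninformative $\vecv 0$ can be skipped with $f_1<u_1$; its completions are dominated by $\vecv u$ only thanks to an ancestor's bound.

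Instead, prove the proviso jointly with your completeness invariant by induction on time; no case split on $c_1(\pi)=u_1$ is needed. Suppose that at time $t$, before $\vecv c^\ast$ is found, the binding of the invariant node would move past $\vecv h^\ast$ because of a solution $\vecv u$ closed at $t_u<t$.
\begin{itemize}
\item The invariant at $t_u$ yields a node $m$ in \open{} on a path of cost at most $\vecv c^\ast$, whose binding is not past $\vecv h^\ast(m)$.
\item Hence $\vecv u\le_{\rm lex}\vecv f(m)\le_{\rm lex}\vecv g(m)+\vecv h^\ast(m)\le_{\rm lex}\vecv c^\ast$, so $u_1\le c^\ast_1$.
\item Combined with $\Tr(\vecv u)\weakdom\Tr(\vecv g+\vecv h^\ast)\weakdom\Tr(\vecv c^\ast)$, this gives $\vecv u\weakdom\vecv c^\ast$.
\item By Pareto-optimality, $\vecv u=\vecv c^\ast$, so $\vecv c^\ast$ was already in $\sols$ at $t_u$, a contradiction.
\end{itemize}
The base case is free, since nothing is skipped before the first goal node is closed.

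The same chain gives your soundness step directly. A goal node $\vecv v$ closed before a Pareto-optimal $\vecv w\prec\vecv v$ is found would satisfy $\vecv v\le_{\rm lex}\vecv f(m)\le_{\rm lex}\vecv w<_{\rm lex}\vecv v$, which is impossible.
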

\emph{\frankenmvhproofnote}

While inconsistent heuristics may trigger repeated full Pareto fallbacks (adding subsequently dominated cost vectors to $G_{\mathrm{cl}}(s)$), we demonstrate in Sec.~\ref{sec:evaluation} that these effects remain limited in practice, allowing \frankenmvh\ to achieve dramatic speedups when the MVH provides strong search guidance.

\ignore{

\section{\frankenmvh}
\oldlabel{old:sec:frankenmvh}

We now present a second framework that enables the sound use of DR together with MVHs. Crucially, this framework requires only the \emph{admissibility} of the heuristic, dropping the consistency requirement. The key idea is to use DR optimistically: assume the required ordering invariants hold, detect the exact moments they are violated, and dynamically fall back to a full Pareto dominance test only when absolutely necessary.

Similar to \namoadr, \frankenmvh\ (Alg~\ref{alg:cfs}) is a best-first MOS algorithm that combines a \namoa-style node-selection policy with DR-based pruning. Like \namoa\ and \namoadr, the algorithm maintains a priority queue \open{} and selects for extraction the node with the lexicographically-smallest $\vecv{f}$-value. However, to support its optimistic DR strategy, it maintains \emph{two} closed-sets for every state $s$: 
(i) a truncated closed-set $\GTcl{s}$ used for fast pruning based on $t$-discarding, and 
(ii) a full non-dominated set $G_{\mathrm{cl}}(s)$ of discovered $\vecv{g}$-values, which serves as a correctness-critical fallback. Lastly, $\sols$ is used to track solutions.

\subsection{Algorithmic Mechanics}

The core mechanics of \frankenmvh\ are designed to overcome the limitations of the strict-consistency approach discussed in Sec.~\ref{sec:namoadr_mvh}.

\paragraph{Tackling the State-Space Explosion: Lazy Evaluation.}
In an attempt to avoid generating redundant nodes for every possible consistent heuristic combination, \frankenmvh\ represents each node simply as $n=(s, \vecv{g}, \vecv{f})$. A node is instantiated with only the single $\vecv{h}$-value that yields the lexicographically-smallest globally non-dominated $\vecv{f}$-value. This is handled by the \textsc{ChooseH} function (Line~\ref{line:alg2-choose-h}). 

\textsc{ChooseH}$(s,\vecv{g},T)$ scans $H(s)$ in lexicographic order of $\vecv{g}+\vecv{h}$ and returns the first heuristic vector that is not already $t$-discarded by the target set $T$ (typically $\GTcl{\sgoal}$). It returns $\bot$ only if all admissible heuristic choices are globally discarded. 

If an extracted node is later found to be globally dominated by a newly discovered solution, it is not immediately discarded. Instead, the algorithm performs a \emph{lazy evaluation} (Line~\ref{line:alg2-reinsertion}): it calls \textsc{ChooseH} again to find the next lexicographically-smallest non-dominated $\vecv{f}$-value and re-inserts the node into \open{}. This strategy preserves best-first extraction while avoiding the simultaneous insertion of multiple nodes representing the same path.

\paragraph{Tackling the Consistency Burden: Optimistic Local Checks.}
Because \frankenmvh\ does not require a consistent MVH, the monotonic growth invariant (Property~\ref{prop:dr}) is not guaranteed. Lexicographic extraction of $\vecv{f}$-values might occasionally result in nodes of the same state being expanded out of lexicographic $\vecv{g}$-value order. This implies that naive $t$-discarding can't be used.

To resolve this, the \textsc{LocalDomCheck} function (Line~\ref{line:alg2-local-dom-check-def}) performs local dominance checking in two sequential stages:
\begin{enumerate}
    \item \textbf{Optimistic $t$-discarding:} First, it checks if $\Tr(\vecv{g})$ is dominated by $\GTcl{s}$ (Line~\ref{line:alg2-t-discarding}). If it is not, the node is immediately accepted. 
    \item \textbf{Invariant Validation \& Fallback:} If the node is (allegedly) $t$-discarded, the algorithm must verify if the $t$-discarding is actually sound. It checks whether the new node $g_1$ is greater than or equal to the maximum $g_1$ seen so far for state $s$, namely $\max g'_1$ (Line~\ref{line:alg2-t-discarding-violation}). 
    \begin{itemize}
        \item If $g_1 \ge \max g'_1$, then Property~\ref{prop:dr} holds and the monotonic growth assumption is locally valid. Thus, $t$-discarding is sound, and the node is safely pruned.
        \item If $g_1 < \max g'_1$, the algorithm has detected a violation of the lexicographic extraction order. The $t$-discarding is potentially unsafe and the algorithm falls back to a full Pareto dominance check against $G_{\mathrm{cl}}(s)$ and prunes the node only if it is dominated (Line~\ref{line:alg2-fallback}).
    \end{itemize}
\end{enumerate}

\subsection{Theoretical Guarantees}

The correctness of \frankenmvh\ relies on proving that these lazy evaluations and fallback mechanisms are completely sound.

\begin{lemma}[Soundness of \textsc{ChooseH}]
\oldlabel{old:lem:chooseh-sound}
Let a generated node be identified by $n=(s, \vecv{g}, \vecv{f})$. If \textsc{ChooseH}$(s,\vecv{g}, \GTcl{\sgoal})$ returns $\bot$, then for every heuristic vector $\vecv{h}\in H(s)$, the resulting evaluation vector $\vecv{f} = (\vecv{g}+\vecv{h})$ is $t$-discarded by $\GTcl{\sgoal}$. Hence, no completion of $n$ can yield a Pareto-optimal solution.
\end{lemma}
\begin{proof}
\textsc{ChooseH} scans all heuristic vectors $\vecv{h}\in H(s)$ and returns the lexicographically-first vector which is not $t$-discarded by $T$. When calling \textsc{ChooseH} against~$\GTcl{\sgoal}$, returning $\bot$ means that every admissible heuristic choice is discarded by an existing solution in~$\GTcl{\sgoal}$. Since $H$ is admissible, each $\vecv{f} = \vecv{g}+\vecv{h}$ is a lower bound on any completion through $s$. Therefore, if all such evaluations are dominated by already discovered solution costs, no continuation of $\vecv{g}$ can produce a new Pareto-optimal solution.
\end{proof}


\begin{lemma}[Soundness of \textsc{LocalDomCheck}]
\oldlabel{old:lem:franken-local-full}
The two stages of pruning executed within \textsc{LocalDomCheck} are sound.
\end{lemma}
\begin{proof}
Let a generated node be identified by $n=(s, \vecv{g}, \vecv{f})$. If $\GTcl{s}\not\prec\Tr(\vecv{g})$, then $G_{\mathrm{cl}}(s) \not\prec \vecv{g}$ must also be true, and the node is correctly not pruned (line~\ref{line:alg2-t-discarding}). If $\GTcl{s} \preceq \Tr(\vecv{g})$, we divide the proof into cases: 
If $\max\{g'_1 \mid \vecv{g}' \in \GTcl{s}\} \le g_1$, then every previously extracted node with state $s$ has a first-component cost of at most $g_1$, which is the exact condition required for $t$-discarding correctness.
If $\max\{g'_1 \mid \vecv{g}' \in \GTcl{s}\} > g_1$ then $g$ is not dominated by discovered vectors in $\GTcl{s}$, and pruning is evaluated against the full discovered Pareto frontier discovered in the closed-set $G_{\mathrm{cl}}(s)$ (line~\ref{line:alg2-fallback}). In this final case, pruning is performed exactly as in the standard, proven \namoa\ dominance check.
\end{proof}

\begin{theorem}[Correctness of \frankenmvh{}]
\oldlabel{old:thm:franken-correct}
Assume $H$ is an admissible multi-valued heuristic. Then \frankenmvh\ returns the complete Pareto-optimal solution set.
\end{theorem}
\begin{proof}
A generated node $n=(s, \vecv{g}, \vecv{f})$ may be discarded in only two ways. First, it may fail the global check. By Lemma~\ref{lem:chooseh-sound}, this can happen only after all heuristic choices have been exhausted, in which case no completion of $\vecv{g}$ can lead to a Pareto-optimal solution. Second, it may be removed by the local check, which is proven sound by Lemma~\ref{lem:franken-local-full}. Thus, every discarded node is truly dominated, while every non-dominated path remains eligible for extraction. Since the search is best-first and $H$ is admissible, every Pareto-optimal solution is eventually generated and inserted into $\sols$. Therefore, the returned set is exactly the Pareto-optimal solution set $\Pi^*$.
\end{proof}

\frankenmvh\ combines the efficiency of DR with the advantageous guidance of MVHs. By detecting violations in the $\vecv{g}$-value ordering required for $t$-discarding, the algorithm preserves exact correctness without requiring heuristic consistency. While some heuristics may lead to repeated activations of the full Pareto dominance fallback (and consequently, subsequently discovered nodes may be added to $G_{\mathrm{cl}}(s)$), these effects remain highly limited in practice, as we will demonstrate in Sec.~\ref{sec:evaluation}.

\ignore{
\OS{old text}

We now present a second framework that enables sound use of dimensionality reduction with multi-valued heuristics without requiring heuristic consistency, only admissibility.
The key idea is to use dimensionality reduction optimistically while detecting situations in which its ordering assumptions may be violated and falling back to a full Pareto dominance test when necessary.

\frankenmvh\ is a best-first multi-objective search algorithm that uses a \namoa-style node-selection policy, together with DR-based pruning.
The algorithm assumes only admissibility of the multi-valued heuristic.

Like \namoa\ and \namoadr, \frankenmvh\ is a best-first MOS algorithm, which selects for extraction from \open{} a node with the lexicographically-smallest $\vecv f$-value.
In addition, two closed-sets are maintained for every state $s$: (i) a truncated closed-set $\GTcl{s}$ used for fast pruning based on $t$-discarding (DR), and (ii) a full non-dominated set $G_{\mathrm{cl}}(s)$ of discovered $g$-values. The last closed-set is used for correctness-critical fallback checks. Lastly, $\sols$ is used to track solutions.

In \frankenmvh\, each node is represented by $n=(s, \vecv g, \vecv f)$, where $\vecv f$ depends on a selected $\vecv h\in H(s)$.
To avoid redundant node generation, nodes are generated with the $\vecv h$-value that yields the lexicographically-smallest globally non-dominated $\vecv f$-value using \textsc{ChooseH} function (Line~\ref{line:alg2-choose-h}).
If the node is later pruned by an existing solution in $\sols$, the node may be reinserted by computing its next lexicographically-smallest non-dominated $\vecv f$-value (Line~\ref{line:alg2-reinsertion}).
This strategy preserves best-first extraction while avoiding simultaneous insertion of multiple nodes representing the same path, simplifying the usage of \namoa{}s $F(s, \vecv g)$ group of different non-nominated $\vecv f$-values.

\medskip
\noindent\textbf{\textsc{ChooseH}$(s,\vecv g,T)$} (Line~\ref{line:alg2-choose-h})
scans $H(s)$ in lexicographic order of $\vecv g+\vecv h$ and returns the first heuristic vector that is not $t$-discarded by $T$.
It returns $\bot$ if all heuristic choices are discarded.

After extraction from $\open{}$, the node $\vecv f$-value is first tested for global dominance-check against $\GTcl{\sgoal}$.
If $\Tr(\vecv f)$ is dominated, the algorithm attempts to select an alternative $\vecv h$-value using \textsc{ChooseH} and re-insert the node. Only if no such $\vecv h$-value exists, the node is discarded.

\begin{lemma}[Soundness of \textsc{ChooseH}]
\oldlabel{old2:lem:chooseh-sound}
Let a generated node be identified by $n=(s, \vecv g, \vecv f)$.
If \textsc{ChooseH}$(s,\vecv g, \GTcl{\sgoal}$ returns $\bot$ (Line~\ref{line:alg2-bot-1}, \ref{line:alg2-bot-2}), then for every heuristic vector $\vecv h\in H(s)$, the $\vecv f$-value $\vecv f = (\vecv g+\vecv h)$ is $t$-discarded by $\GTcl{\sgoal}$.
Hence no completion of $n$ can yield a Pareto-optimal solution.
\end{lemma}

\begin{proof}[\upshape\textbf{Proof}]
\textsc{ChooseH} scans all heuristic vectors in $\vec h\in H(s)$ and returns the lexicographically-first which is not $t$-discarded by $T$.
When calling \textsc{ChooseH} with $(s,\vecv g, \GTcl{\sgoal}$, returning $\bot$ means that every admissible heuristic choice is discarded by an existing solution in $\GTcl{\sgoal}$ (lines~\ref{line:alg2-chooseh-1}, ~\ref{line:alg2-chooseh-2}). 
Since $H$ is admissible, each $\vecv f= \vecv g+\vecv h$ is a lower bound on any completion through $s$.
Therefore, if all such evaluations are dominated by already discovered solution costs, no continuation of $\vecv g$ can produce a new Pareto-optimal solution.
\end{proof}

\medskip
\noindent\textbf{\textsc{LocalDomCheck}$(s,\vecv g,T)$} (Line~\ref{line:alg2-local-dom-check-def}) Performs local dominance check in two stages. Let a node be identified by $n=(s, \vecv g, \vecv f)$.
First, it performs a local dominance check against $\GTcl{s}$, assuming that all original vectors in $\GTcl{s}$ have a first-component cost smaller than $\vecv g_1$ (Line~\ref{line:alg2-t-discarding}). if $\Tr(\vecv g)$ is not dominated by $\GTcl{s}$, the node is accepted immediately.
If the node is (allegedly) $t$-discarded, it tests whether all previously extracted nodes with a state $s$ have first-component cost no greater than $g_1$ (Line~\ref{line:alg2-t-discarding-violation}).
If it holds, $\vecv g$ is correctly $t$-discarded and the node is pruned.
Otherwise, a violation of the lexicographic extraction order of the $\vecv g$-values for state $s$ is detected.
The algorithm then performs a full Pareto dominance check against $G_{\mathrm{cl}}(s)$ and prunes the node only if it is dominated by a value in $G_{\mathrm{cl}}(s)$ (Line~\ref{line:alg2-fallback}).
In practice, this fallback test is triggered rarely (see~\ref{sec:evaluation}).

The correctness of \textsc{LocalDomCheck} follows from the fact that every pruning decision is either justified directly by the DR ordering assumptions or repaired by a full dominance fallback when those assumptions are violated.

\begin{lemma}[Soundness of the DR stage local check]
\oldlabel{old2:lem:franken-local-dr}
Let a generated node be identified by $n=(s, \vecv g, \vecv f)$.
Suppose (i) $\vecv g'\preceq \Tr(\vecv g)$ for some $\vecv g'\in \GTcl{s}$, and additionally (ii) $\max\{g'_1 \mid \vecv g' \in G_{\mathrm{cl}}(s)\} \le g_1$.
Then pruning $\Tr(\vecv g)$ by $\GTcl{s}$ is sound.
\end{lemma}

\begin{proof}[\upshape\textbf{Proof}]
(ii) certifies that every previously extracted node with state $s$ has first-component cost at most $g_1$, required for $t$-discarding correctness. If (i) also holds, then all the conditions for $t$-discarding are satisfied. the soundness of dominance check derived from the correctness $t$-discarding.
Hence $\vecv g$ is dominated by an already extracted node and can be safely pruned.
\end{proof}

\begin{lemma}[Soundness of \textsc{LocalDomCheck}]
\oldlabel{old2:lem:franken-local-full}
The two stages of pruning done with \textsc{LocalDomCheck} are sound.
\end{lemma}

\begin{proof}[\upshape\textbf{Proof}]
Let a generated node be identified by $n=(s, \vecv g, \vecv f)$.
If $\GTcl{s}\not\prec\Tr(\vecv g)$, then also $G_{\mathrm{cl}}(s) \not \prec \vecv g$, and the node isn't pruned (Line~\ref{line:alg2-t-discarding}).
Else, we show by dividing into cases. If $\max\{g'_1 \mid \vecv g' \in \GTcl{s}\} \leq \vecv g_1$, correctness follows from lemma~\ref{lem:franken-local-dr} (Line~\ref{line:alg2-t-discarding-violation}).
If $\max\{g'_1 \mid \vecv g' \in \GTcl{s}\} > \vecv g_1$, pruning is done compared to the full Pareto frontier discovered so far in the closed-set $G_{\mathrm{cl}}(s)$ (Line~\ref{line:alg2-fallback}). In the last case, pruning is performed exactly as in the standard \namoa-style dominance check, where a node is filtered against all other discovered node of its state.
\end{proof}

\begin{theorem}[Correctness of \frankenmvh{}]
\oldlabel{old2:thm:franken-correct}
Assume $H$ is admissible.
Then \frankenmvh\ returns the complete Pareto-optimal solution set.
\end{theorem}

\begin{proof}[\upshape\textbf{Proof}]
A generated node $n=(s, \vecv g, \vecv f)$ may be discarded in only two ways.

First, it may fail the global check (Line~\ref{line:alg2-chooseh-1}).
By Lemma~\ref{lem:chooseh-sound}, it can happen only after all heuristic choices have been exhausted, in which case no completion of $\vecv g$ can lead to a Pareto-optimal solution. Second, it may be removed by the local check, which proven sound by Lemma~\ref{lem:franken-local-full}.

Thus every discarded node is truly dominated, while every non-dominated path remains eligible for extraction.
Since the search is best-first and $H$ is admissible, every Pareto-optimal solution is eventually generated and inserted into $\sols$.
Therefore the returned set is exactly the Pareto-optimal solution set $\Pi^*$.
\end{proof}

\frankenmvh\ combines the efficiency of dimensionality reduction
with the advantage guidance of MVHs.
By detecting violations in the order of $\vecv g$-values of similar states required for $t$-discarding,
the algorithm preserves correctness, in a way that works for any admissible \mvh{}s.

When the heuristic is inconsistent, violations of the lexicographically non-decreasing ordering of $g$-values for each state may occur more frequently, leading to repeated activation of performing dominance checks against the full Pareto frontier.
As a consequence, cost vectors that are later dominated by other subsequently discovered may be added to $G_{\mathrm{cl}}(s)$.
Nevertheless, these effects remain limited in practice, and they will be viewed in section~\ref{sec:evaluation}.

}}

\section{Construction of Multi-Valued Heuristics}
\label{sec:mvh-construction}
In this section we describe an approach to generate admissible MVHs as well as how to modify them into consistent MVHs.
Recent work by \citeauthor{geisser_admissible}~\shortcite{geisser_admissible} investigated the construction of admissible and consistent MVHs in the context of domain-independent planning. They extended several classes of classical planning heuristics to the MOS setting. However, while their work provides theoretical insights, these constructions rely heavily on the structural properties of factored planning domains (such as STRIPS representations and delete-relaxation reasoning) and are not directly applicable to general graph-based MOS problems.

To evaluate our algorithms, we require domain-agnostic MVHs. We achieve this through a pre-processing phase that performs an approximate backward MOS from the goal.

\paragraph{A*pex-MVH (Admissible MVHs).} \label{sec:apex-mvh}
To generate an admissible MVH, we use an adaptation of the \apex{} algorithm~\cite{apex}. \apex{} is an approximate MOS algorithm designed to efficiently compute an $\varepsilon$-approximation of the exact Pareto-optimal solution set. It achieves this by grouping similar paths and representing each group using a single multi-dimensional vector called an \emph{apex}. 

An apex is constructed by taking the component-wise minimum cost across all paths within its respective group. Consequently, an apex is guaranteed to weakly dominate the cost vector of every actual path it represents. By executing \apex{} as a backward search from $\sgoal$ with an $\varepsilon$-approximation factor, we compute a set of mutually non-dominated apexes for every state $s$. Because each apex weakly dominates the true path costs to the goal, this resulting set $H(s)$ serves as an admissible MVH.

\paragraph{Consistency-Fixed MVHs.} \label{sec:consistency-fixed}
Unfortunately, the \apex-MVH approach is not guaranteesd to be consistent. Thus, we apply a \emph{consistency-fix} procedure modeled after Bellman-Ford label updates to create consistent MVHs based on existing admissible MVHs to use in \soundmvh.

Intuitively, as long as there is an inconsistency in a state's heuristic set, this procedure lowers the violating heuristic values to resolve it. Specifically, if there exist states $s, s'$ and a heuristic vector $\vecv{h}' \in H(s')$ such that $\vecv{c}(s,s') + \vecv{h}' \prec \vecv{h}$ for some existing estimate $\vecv{h} \in H(s)$, the estimate $\vecv{h}$ violates consistency. We resolve this by inserting the smaller vector $\vecv{c}(s,s') + \vecv{h}'$ into $H(s)$ and discarding any newly-dominated values in $H(s)$, including the original $\vecv{h}$. 

This refinement propagates backward through the graph until a fixed point is reached and all local inconsistencies are resolved. Because this procedure only lowers bounds that were originally admissible, the resulting MVH remains admissible and becomes, by construction, consistent. However, the procedure tends to result in more complex, less-informed MVH, as we will see in Section~\ref{sec:evaluation}.

\section{Evaluation}
\label{sec:evaluation}

In this section we evaluate our two new algorithms \soundmvh and \frankenmvh, comparing them to the baselines \namoa and \namoadr.
Experiments were run on three different benchmarks (\netm\, \ny\ and \panda\ which have three, three and eight objectives, respectively).\footnote{\url{https://github.com/CRL-Technion/Multi-Objective-Search-Benchmarks.git}.
}
All experiments were conducted on an AWS computing cluster, with 32 GB of memory and runtime limit of 7200s per query instance (except \texttt{netM10} where we used 3600s), and executed in a single-threaded CPU configuration on Intel Xeon processors (2.1–2.4 GHz).
We implemented all algorithms in C++, using a common codebase as much as possible\footnote{\url{https://github.com/CRL-Technion/bridging-mvh-dr}.}.
For each benchmark we compared 50 different search queries. 
We used \namoadr\ where the SVH is the ideal-point heuristic SVH $\hip$~\cite{salzman2025multi}\footnote{
$\hip$ combines a set of $N$ single-objective heuristics $h_1, \ldots, h_N$. 
Here, $h_i:S\rightarrow \mathbb{R}_{\geq 0}$ corresponds to the shortest path from each state according to the $i$'th objective and $\forall s \in S,~\mathbf{\hip}(s) := ( h_1(s), \ldots, h_N(s) )$.
The ideal point heuristic, which is admissible, is easily computed by running $N$ (single-objective) instances of Dijkstra’s algorithm starting from~$\sgoal$ (i.e., one instance for each objective).
}.

We start by outlining the properties of the MVH we construct.
We then continue with a comparison of the different algorithms across the benchmarks and
conclude by pinpointing the source of the efficiency of \frankenmvh, i.e., how effective is the lazy approach we employ.

\paragraph{MVH construction.}
To obtain admissible and consistent MVHs, we follow the approach described in Sec.~\ref{sec:consistency-fixed}.
Specifically, we run \apex using different approximation factors $\varepsilon$ to obtain the so-called ``A*pex-MVH'' 
and then apply the consistency-fixed procedure.\footnote{We selected 4 $\varepsilon$-approximation factors.
For \netm\ we used $\varepsilon\in\{ {0.2}, {0.6}, {1.6}, {2.4}\}$
For \ny\ we used $\varepsilon\in\{ {0.08}, {0.1}, {0.2}, {0.4}\}$; 
For \panda\ we used $\varepsilon\in\{ {0.6}, {1}, {1.5}, {2}\}$.} 
We refer to these MVHs as (A)-type and (C)-type, respectively.
Fig.~\ref{fig:mvh_runtime_table} reports the MVH size and computation time as a function of $\varepsilon$.
As expected, both size and computation time decrease as $\varepsilon$ increases. However, as we will see shortly, the extra running time invested in creating larger, more informed MVHs will allow, when the MVH provides strong search guidance, a larger speedup across multiple queries. 

\paragraph{Algorithms comparison.}
To compare the different algorithms, we start (Fig.~\ref{fig:fig2-runtime-vs-mvh-size}) by plotting the runtime of each algorithm (for \namoa and \frankenmvh we use both (A)-type and (C)-type) as a function of the MVH size.
For \netm, we see that as the MVH size increases, the obtainable speedup also increases, with a clear advantage for the larger, admissible-only (A)-type MVH over the more complex (C)-type.
This suggests that \netm\ is an environment where stronger heuristic guidance can successfully narrow the search space without being overwhelmed by objective correlation or density.
For \ny, there is no speedup. We attribute this to the fact that this benchmark includes highly-correlated objectives~\cite{halle_correlated}. The high correlation implies that the dominance relationship is more frequently satisfied, resulting in sparse Pareto sets that effectively collapse the search space toward a lower-dimensional frontier where pruning using regular SVH (like \namoadr with $\hip$) remains highly efficient. Thus, the additional guidance of an MVH has little room to improve over the already effective SVH-based pruning.
Finally for \panda, we see a slight speedup for smaller MVHs.
We suggest that the topological density of \panda\ generates a Pareto frontier with high cardinality, so that the objective space becomes effectively saturated with non-dominated solutions. This density dilutes the pruning power of any heuristic, including MVHs.
A relatively small MVH or a SVH is able to cover most of the different trade-offs just by exploring the graph.
Together, these results suggest that the benefit of \frankenmvh\ depends on whether the additional MVH guidance can translate into meaningful pruning. This happens clearly in \netm, while in \ny\ and \panda\ the baseline pruning is already competitive, leaving less room for improvement.
\begin{figure}[t]
    \centering

    \begin{subfigure}{\columnwidth}
        \centering
        \includegraphics[width=0.92\linewidth]{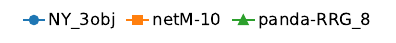}
    \end{subfigure}

    \vspace{0.25em}

    \begin{subfigure}{0.48\columnwidth}
        \centering
        \includegraphics[width=\linewidth]{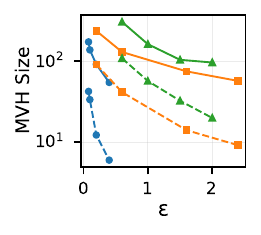}
        \caption{Average MVH size as a function of $\epsilon$.}
        \label{fig:mvh-size}
    \end{subfigure}
    \hfill
    \begin{subfigure}{0.48\columnwidth}
        \centering
        \includegraphics[width=\linewidth]{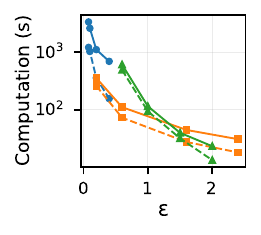}
        \caption{Average computation time as a function of $\epsilon$.}
        \label{fig:mvh-runtime}
    \end{subfigure}

    \caption{MVH size (a) and MVH computation time (b) as a function of $\varepsilon$ for each benchmark.
    Solid and dashed lines correspond to (C)-type and (A)-type MVHs, respectively.}
    \label{fig:mvh_runtime_table}
\end{figure}


\begin{figure}[t]
    \centering

    \includegraphics[width=\columnwidth]{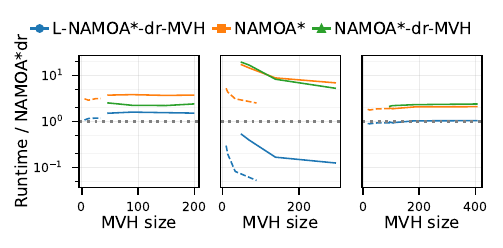}

    \vspace{-1.6em}
    \hspace{0.2em}
    \begin{subfigure}[t]{0.32\columnwidth}
        \centering
        \caption{\ny}
        \label{fig:runtime-mvh-ny}
    \end{subfigure}
    \begin{subfigure}[t]{0.32\columnwidth}
        \centering
        \caption{\netm}
        \label{fig:runtime-mvh-netm10}
    \end{subfigure}
    \begin{subfigure}[t]{0.32\columnwidth}
        \centering
        \caption{\panda}
        \label{fig:runtime-mvh-panda}
    \end{subfigure}

    \caption{Runtime relative to \namoadr as a function of MVH size for each algorithm. Solid and dashed lines correspond to (C)-type and (A)-type MVHs, respectively. \ny: 264,346 nodes, 733,846 edges, 3 objectives, \netm: 10,000 nodes, \small 59,743 edges, 3 objectives, \panda: 1,000 nodes, \small 10,260 edges, 8 objectives.}
    \label{fig:fig2-runtime-vs-mvh-size}
\end{figure}

\begin{figure}[htbp]
    \centering
    
    \begin{subfigure}[t]{\columnwidth}
        \centering
        \includegraphics[width=0.8\columnwidth]{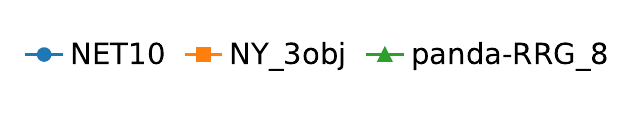}
    \end{subfigure}

    \vspace{0.2em}

    \begin{subfigure}[t]{\columnwidth}
        \centering
        \includegraphics[width=0.5\linewidth]{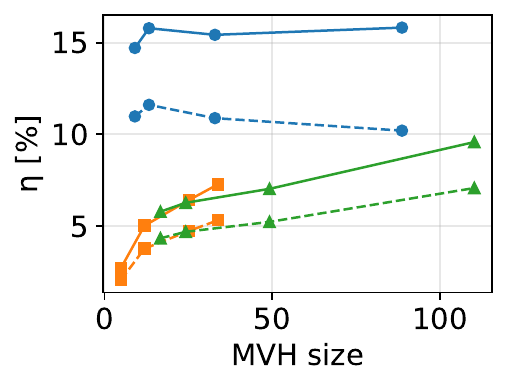}
    \end{subfigure}

    \caption{
    Frequency $\eta$, measured in percentage for which \frankenmvh performs full dominance checks out of all local (solid line) and global (dashed line) dominance checks, as a function of (A)-type MVHs.
    }

    \label{fig:ratio-vs-mvh-size}
\end{figure}
\begin{figure}[t]
    \centering

    \includegraphics[width=\columnwidth]{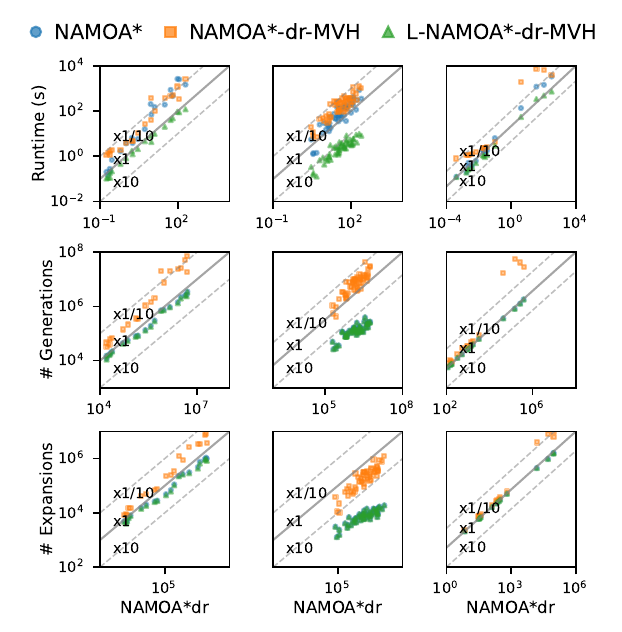}

    \vspace{-1em}
    \hspace{0.2em}
    \begin{subfigure}[t]{0.32\columnwidth}
        \centering
        \caption{\ny}
        \label{fig:node-effort-ny}
    \end{subfigure}
    \hfill
    \begin{subfigure}[t]{0.32\columnwidth}
        \centering
        \caption{\netm}
        \label{fig:node-effort-netm10}
    \end{subfigure}
    \hfill
    \begin{subfigure}[t]{0.32\columnwidth}
        \centering
        \caption{\panda}
        \label{fig:node-effort-panda}
    \end{subfigure}

\caption[Runtime and node-effort comparison]{
Per-query comparison against \namoadr\ across the three benchmarks.
Rows show runtime, node generations, and node expansions; columns correspond to benchmarks (see Fig.~\ref{fig:fig2-runtime-vs-mvh-size}).
Runtime results use the best-performing MVH for each algorithm.
For node generations and expansions, a single MVH is fixed per benchmark.
The solid diagonal marks parity with \namoadr, and dashed diagonals mark $10\times$ differences.
}
    \label{fig:node-effort-benchmarks}
\end{figure}

We continue in Fig.~\ref{fig:node-effort-benchmarks} to illustrate the possible speedup for each algorithm variant, together with their number of generated and expanded nodes. Every algorithm is selected with its best-performing MVH.
\frankenmvh\ provides a consistent speedup or similar runtime to \namoadr, often exceeding one order of magnitude ($>10\times$) in the \netm\ benchmark.
In contrast, \namoa\ and \soundmvh\ perform significantly slower, emphasizing the necessity of DR optimizations upon its lack or relaxed adaptation.

Also in Fig~\ref{fig:node-effort-benchmarks}, we can see that the operation counts closely align with the runtime results.
In particular, \soundmvh, which creates separate nodes for different heuristic values, causes a state-space blow-up.
By contrast, \frankenmvh's generated and expanded node counts are exactly the same as those of \namoa, so the corresponding marks collapse together.
This shows that \frankenmvh\ preserves the advantage of MVH guidance while avoiding the cost of eagerly materializing all heuristic bindings.

\paragraph{Effectiveness of Laziness.}
Figure~\ref{fig:ratio-vs-mvh-size} illustrates the frequency for which \frankenmvh requires performing full dominance check instead of using DR. The very low percentages across all cases ($16\%$ at most) highlight the efficiency of applying lazy DR dominance checks, when only occasionally falling back to doing full-dominance check.

\section{Conclusion \& Future Work}
\label{sec:conclusion}

In this paper, we addressed the fundamental incompatibility between MVHs and DR in MOS. By tracing how heuristic variance destroys the monotonic growth invariant required for safe $t$-discarding, we established the first theoretical frameworks to restore search correctness. While our baseline (\soundmvh) proved that consistency restores the invariant at the cost of a state-space explosion, our primary contribution, \frankenmvh, successfully avoids this bottleneck. By falling back to full dominance checks only when local sequence violations are detected, \frankenmvh\ preserves the efficiency of DR while exploiting the stronger guidance of MVHs, achieving order-of-magnitude speedups in instances where this guidance translates into stronger pruning.

This work opens several directions for MOS, including the design of domain-independent MVH construction methods that produce tighter, compact admissible sets for general graphs.
Additionally, the optimistic, lazy-validation mechanics introduced in \frankenmvh\ are highly modular; future work will explore integrating these techniques into other state-of-the-art multi-objective frameworks, to further extend the reach and efficiency of heuristic-guided MOS algorithms.
\clearpage

\bibliography{references}

\ifarxivappendix
\clearpage
\appendix
\section{Supplementary material for Sec.~4 (\soundmvh)}
\label{app:sec:soundmvh}

Recall that the variability of heuristic estimates within an MVH destroys the ordering invariant required for DR. However, in Sec.~4 we show that if we group nodes not merely according to their state $s$, but according to state-heuristic pairs $(s, \vecv{h})$, we can isolate the variance and restore the invariant.

To achieve this, the search algorithm must guarantee that a path maintains a consistent heuristic sequence from start to goal. We formalize this requirement as follows:

\begin{definition}[Path-Consistent Heuristic Selection]
\label{app:def:mvh-sound}
A MOS algorithm employs path-consistent heuristic selection if, whenever a node $n$ with state $s$ generates a successor $n'$ with state $s'$, the chosen heuristic vector $\vecv{h}(n') \in H(s')$ satisfies $\vecv{h}(n) \preceq \vecv{c}(s,s') + \vecv{h}(n')$.
\end{definition}

This allows us to recover the monotonicity required for~DR.
\begin{lemma}[$\vecv{f}$-value monotonicity along a path]
\label{app:lem:f-monotonicity}
Assume~$H$ is a consistent MVH and the algorithm employs path-consistent heuristic selection. Then along any generated path, the sequence of evaluation vectors $\vecv{f} = \vecv{g} + \vecv{h}$ is component-wise non-decreasing.
\end{lemma}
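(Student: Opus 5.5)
The plan is to prove the claim one edge at a time and then chain the edges together. I fix an arbitrary generated path and two consecutive nodes $n$ and $n'$ on it, where $n'$ (state $s'$) is generated as a successor of $n$ (state $s$). It then suffices to show $\vecv{f}(n) \preceq \vecv{f}(n')$. Component-wise weak dominance $\preceq$ is transitive, so a straightforward induction on the number of edges from the root node at $\sstart$ gives $\vecv{f}(n_i) \preceq \vecv{f}(n_j)$ for all $i \le j$ along the path. That is exactly the statement that the sequence is component-wise non-decreasing.

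For the single-edge step, I use Def.~\ref{app:def:mvh-sound} directly. The heuristic vector bound to $n'$ satisfies
\[
\vecv{h}(n) \preceq \vecv{c}(s,s') + \vecv{h}(n').
\]
The node-generation rule (Line~\ref{line:new-successor}--\ref{line:new-successor-end} of Alg.~\ref{alg:soundmvh}) gives $\vecv{g}(n') = \vecv{g}(n) + \vecv{c}(s,s')$. Weak dominance is preserved under adding the same vector to both sides, since each component inequality $a_i \le b_i$ implies $a_i + x_i \le b_i + x_i$. Adding $\vecv{g}(n)$ to both sides therefore yields
\[
\vecv{f}(n) = \vecv{g}(n) + \vecv{h}(n) \preceq \vecv{g}(n) + \vecv{c}(s,s') + \vecv{h}(n') = \vecv{g}(n') + \vecv{h}(n') = \vecv{f}(n').
\]

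The one point needing care is that the selection in Def.~\ref{app:def:mvh-sound} is always possible, so that every generated node actually has a bound heuristic vector with the required property. This is where consistency of $H$ enters. For every $\vecv{h} \in H(s)$ and every edge $(s,s')$, consistency guarantees some $\vecv{h}' \in H(s')$ with $\vecv{h} \preceq \vecv{c}(s,s') + \vecv{h}'$. Hence the set of valid successor bindings is nonempty, and every node on the path, including nodes spawned through the red annotation of Line~\ref{line:alg1-successor-generation}, carries an $\vecv{h}$ meeting the inequality. The base case is the start node, with $\vecv{g}=\mathbf{0}$ and some $\vecv{h}\in H(\sstart)$; it imposes no constraint.

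I expect no real obstacle beyond this bookkeeping. The only subtlety is that "path" should be read as the chain of search nodes linked by $\mathrm{parent}$ pointers, not as a path of states. The same state may occur with different bound heuristic vectors in different nodes, and the argument only compares each node with its own parent.
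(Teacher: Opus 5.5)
Your proof is correct and is the same argument as the paper's: add $\vecv{g}(n)$ to both sides of the path-consistent selection inequality $\vecv{h}(n) \preceq \vecv{c}(s,s') + \vecv{h}(n')$, then use $\vecv{g}(n') = \vecv{g}(n) + \vecv{c}(s,s')$. Your transitivity chaining and your remark that consistency guarantees a valid binding exists are harmless extras; the latter is not needed for this lemma, because a successor with no valid binding is simply never generated, so it matters for completeness rather than for monotonicity.
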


\begin{proof}
By Def.~\ref{app:def:mvh-sound}, every generated successor node $n'$ with state $s'$ satisfies $\vecv{h}(n) \preceq \vecv{c}(s,s') + \vecv{h}(n')$. Adding $\vecv{g}(n)$ to both sides, and using the fact that $\vecv{g}(n') = \vecv{g}(n) + \vecv{c}(s,s')$, yields $\vecv{g}(n) + \vecv{h}(n) \preceq \vecv{g}(n') + \vecv{h}(n')$, which is $\vecv{f}(n) \preceq \vecv{f}(n')$.
\end{proof}

\begin{lemma}
\label{app:lem:lex-f-extraction}
Assume $H$ is a consistent MVH and the algorithm employs path-consistent heuristic selection. Then, nodes are extracted from \open{} in lexicographically non-decreasing order of their $\vecv{f}$-values.
\end{lemma}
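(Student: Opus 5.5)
We argue by induction on the number of extractions, using the classic "monotone priority queue" argument: we show that the minimum $\vecv f$-value present in \open{} never decreases lexicographically, so the sequence of popped values is lexicographically non-decreasing. We need two facts. First, componentwise $\preceq$ implies $\le_{\rm lex}$. Second, by Lemma~\ref{app:lem:f-monotonicity}, every successor $n'$ inserted while expanding an extracted node $n$ satisfies $\vecv f(n) \preceq \vecv f(n')$, and hence $\vecv f(n) \le_{\rm lex} \vecv f(n')$.

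\textbf{Invariant.} Let $n_1, n_2, \dots$ be the extracted nodes in order. The invariant is: immediately after $n_k$ is popped, every node remaining in \open{} has an $\vecv f$-value lexicographically at least $\vecv f(n_k)$.

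\textbf{Base case.} When $n_1$ is popped, it is the lexicographic minimum of \open{}, so the invariant holds.

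\textbf{Inductive step.} After $n_k$ is popped, the only changes to \open{} before the next pop are insertions of successors of $n_k$. Pruning only removes nodes, so it cannot lower the minimum. Each inserted successor $n'$ has $\vecv f(n') \succeq_{\rm lex} \vecv f(n_k)$ by the fact above. All older entries are already $\succeq_{\rm lex} \vecv f(n_k)$ by the inductive hypothesis. Hence $n_{k+1}$, the lexicographic minimum of \open{} at the next pop, satisfies $\vecv f(n_k) \le_{\rm lex} \vecv f(n_{k+1})$. The invariant persists because $n_{k+1}$ is the minimum of the updated \open{}.

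The delicate point is making sure that no operation other than inserting successors ever places a node in \open{}. In \soundmvh, as described in Alg.~\ref{alg:soundmvh} with the red annotations, there is no re-insertion step of the kind performed at Line~\ref{line:alg2-reinsertion} in \frankenmvh. Every node carries a single fixed heuristic vector $\vecv h$, and splitting over $H(s')$ happens only at generation time. So every insertion is a successor generation covered by Lemma~\ref{app:lem:f-monotonicity}. I would state this explicitly, and also note that discarding a node at the pruning lines never inserts anything. A small subtlety is ties in lexicographic order; these are harmless because the claim is non-decreasing rather than strict.
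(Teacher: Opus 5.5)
Your proof is correct and follows the same route as the paper. The paper's proof is the three-sentence version of your invariant: each pop takes the lexicographic minimum of \open{}, and by Lemma~\ref{app:lem:f-monotonicity} every newly inserted successor has an $\vecv{f}$-value lexicographically no smaller than its parent's, so later insertions cannot violate the extraction order. Your explicit induction, the remark that $\preceq$ implies $\le_{\rm lex}$, and the check that \soundmvh{} has no re-insertion step make rigorous what the paper leaves implicit.
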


\begin{proof}
Each extraction removes the lexicographically smallest $\vecv{f}$ from \open{}. By Lemma~\ref{app:lem:f-monotonicity}, every newly generated successor node has an $\vecv{f}$-value that is lexicographically greater than or equal to that of its parent. Therefore, any later insertion into \open{} cannot violate the extraction order.
\end{proof}

By forcing the algorithm to be path-consistent, we guarantee that among nodes sharing the exact same state \emph{and} heuristic vector, the $\vecv{g}$-values are extracted in the correct order for $t$-discarding.

\begin{theorem}[Restored $\vecv{g}$-value ordering]
\label{app:thm:restored-g-order}
Assume $H$ is a consistent MVH and the algorithm employs path-consistent heuristic selection. Let $n$ and $n'$ be two extracted nodes that share the same state $s$ and the exact same selected heuristic vector $\vecv{h}$. If $n$ is extracted before $n'$, then $\vecv{g}(n) \le_{\rm lex} \vecv{g}(n')$.
\end{theorem}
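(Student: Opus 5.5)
The plan is to reduce the statement to the lexicographic extraction order of $\vecv f$-values given by Lemma~\ref{app:lem:lex-f-extraction}, and then cancel the common heuristic vector.

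\textbf{Step 1 (extraction order of $\vecv f$).} Since $n$ is extracted before $n'$, Lemma~\ref{app:lem:lex-f-extraction} gives $\vecv f(n)\le_{\rm lex}\vecv f(n')$. I would double-check why this lemma holds, because it is the only real content of the argument. At the moment $n$ is popped, it is the lexicographic minimum of \open{}. Every node inserted into \open{} afterwards descends, through a chain of generations, from some node that was in \open{} at that time (possibly $n$ itself). By Lemma~\ref{app:lem:f-monotonicity}, $\vecv f$ is component-wise, hence lexicographically, non-decreasing along each such chain. So any node extracted later either was already in \open{} when $n$ was popped, in which case its $\vecv f$ is lex-$\ge\vecv f(n)$ by minimality, or descends from such a node, in which case its $\vecv f$ is lex-$\ge$ its ancestor's and hence lex-$\ge\vecv f(n)$.

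\textbf{Step 2 (cancel the shared heuristic).} By hypothesis $\vecv f(n)=\vecv g(n)+\vecv h$ and $\vecv f(n')=\vecv g(n')+\vecv h$ with the same $\vecv h$. Lexicographic order is invariant under translation by a fixed vector, since the first index where two vectors differ, and the sign of that difference, are unchanged when the same $\vecv h$ is subtracted from both. Therefore $\vecv g(n)\le_{\rm lex}\vecv g(n')$.

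\textbf{Main obstacle.} The delicate point is making Step~1 rigorous in the presence of pruning and the split of successors into one node per admissible $\vecv h'$. I would argue that pruning only removes nodes from \open{}, so it cannot create an out-of-order extraction. I would also note that each split successor individually satisfies Definition~\ref{app:def:mvh-sound}, so Lemma~\ref{app:lem:f-monotonicity} applies to each parent--child pair.

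The induction would then be on the number of iterations, with the invariant: every node in \open{} has $\vecv f$ lex-$\ge$ the last extracted $\vecv f$. This invariant is preserved by each pop (the popped node is the minimum) and by each insertion (the new child's $\vecv f$ is lex-$\ge$ its just-extracted parent's). Consistency of $H$ enters only to guarantee that a valid $\vecv h'$ exists for each successor, so that the path-consistent selection never leaves a path unrepresented. It is not needed for the ordering argument itself.
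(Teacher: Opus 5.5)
Your proposal is correct and matches the paper's proof. You first invoke the lemma that nodes are extracted in lexicographically non-decreasing order of $\vecv{f}$, which gives $\vecv{f}(n)\le_{\rm lex}\vecv{f}(n')$, and then subtract the shared $\vecv{h}$, which preserves lexicographic order. Your explicit invariant-based induction for the extraction lemma is just a more careful version of the paper's one-line justification. The same holds for your remark that the ordering uses only path-consistent selection and not consistency of $H$ itself.
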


\begin{proof}
By Lemma~\ref{app:lem:lex-f-extraction}, since $n$ is extracted before $n'$, we know $\vecv{f}(n) \le_{\rm lex} \vecv{f}(n')$. Because both nodes share the exact same heuristic vector $\vecv{h}$, subtracting $\vecv{h}$ component-wise from both evaluation vectors preserves the lexicographic order: $\vecv{g}(n) = \vecv{f}(n) - \vecv{h} \le_{\rm lex} \vecv{f}(n') - \vecv{h} = \vecv{g}(n')$.
\end{proof}

\section{Supplementary material for Sec.~5 (\frankenmvh : Lazy DR)}
\label{app:sec:frankenmvh}
The correctness of \frankenmvh\ relies on proving that these lazy evaluations and fallback mechanisms are completely sound.

\begin{lemma}[Soundness of \textsc{ChooseH}]
\label{app:lem:chooseh-sound}
Let a generated node be identified by $n=(s, \vecv{g}, \vecv{f})$. If \textsc{ChooseH}$(s,\vecv{g}, \GTcl{\sgoal})$ returns $\bot$, then for every heuristic vector $\vecv{h}\in H(s)$, the resulting evaluation vector $\vecv{f} = (\vecv{g}+\vecv{h})$ is $t$-discarded by $\GTcl{\sgoal}$. Hence, no completion of $n$ can yield a Pareto-optimal solution.
\end{lemma}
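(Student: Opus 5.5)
The plan has two parts, matching the two claims of Lemma~\ref{app:lem:chooseh-sound}. The first is a direct inspection of \textsc{ChooseH}. The second is an admissibility argument that turns $t$-discarding of every $\vecv f$-value into dominance of every completion.

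\textbf{Part 1: what a return value of $\bot$ means.} The loop in \textsc{ChooseH} ranges over all of $H(s)$, and $H(s)$ is finite. The only way to skip a vector $\vecv h$ is the \textbf{continue} branch, which fires exactly when $T \preceq \Tr(\vecv g+\vecv h)$ with $T=\GTcl{\sgoal}$. So if the function returns $\bot$, every $\vecv h\in H(s)$ took that branch. This gives the first claim: each $\vecv f=\vecv g+\vecv h$ is (truncation-)discarded by $\GTcl{\sgoal}$, i.e., some truncated solution cost $\vecv t\in\GTcl{\sgoal}$ satisfies $\vecv t\preceq \Tr(\vecv f)$.

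\textbf{Part 2: no completion can be Pareto-optimal.} Let $\pi$ be any path from $s$ to $\sgoal$. The completed solution has cost $\vecv g+\vecv c(\pi)$. By admissibility of $H$ there is $\vecv h\in H(s)$ with $\vecv h\preceq \vecv c(\pi)$, hence $\vecv f=\vecv g+\vecv h\preceq \vecv g+\vecv c(\pi)$. By Part 1 some $\vecv t\in\GTcl{\sgoal}$ has $\vecv t\preceq \Tr(\vecv f)\preceq \Tr(\vecv g+\vecv c(\pi))$. The vector $\vecv t$ is the truncation of a solution $\vecv u$ already placed in $\sols$.

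We still need $u_1\le g_1+c_1(\pi)$ to turn truncated dominance into (weak) dominance of the completion by $\vecv u$. We then argue that a completion which is weakly dominated by an existing solution adds nothing new to the cost-unique set $\Pi^*$: it is either strictly dominated or a duplicate cost.

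\textbf{Main obstacle: the first coordinate.} Under an MVH, Property~\ref{prop:dr} fails (Lemma~\ref{lem:mvh-breaks-prop1}), so we cannot simply import the $t$-discarding guarantee of \namoadr. I would first try to show that goal nodes are inserted into $\sols$ in nondecreasing $f_1=g_1$ order. At the goal $H(\sgoal)=\{\mathbf 0\}$ suffices for admissibility, so $\vecv f=\vecv g$ there. Next, by induction on extractions, every node still alive whose completion could reach the goal has $f_1$ at least the current extraction key. This is the standard best-first argument using admissibility: the lex-minimal $\vecv f$ in \open{} lower-bounds the $f_1$ of every later-extracted node on any path, since each node's $\vecv f$ lower-bounds its completions.

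With that in hand, every $\vecv u$ in $\sols$ at the time \textsc{ChooseH} is called has $u_1\le$ the current extraction key $\le f_1\le g_1+c_1(\pi)$. Combined with the truncated dominance from Part 2, this closes the argument. If this monotonicity proves delicate because of lazy re-insertion, the fallback is to note that re-insertion only increases $\vecv f$ lexicographically. The extraction keys of the goal nodes therefore remain monotone, and this weaker statement is all the argument requires.
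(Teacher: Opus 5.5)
Your Parts~1 and~2 are essentially the paper's whole proof. The paper stops there and reads the test $\GTcl{\sgoal}\preceq\Tr(\vecv f)$ as dominance by an already found solution. You correctly see that $u_1\le g_1+c_1(\pi)$ is still missing, but the argument you sketch for it fails for an admissible, inconsistent MVH, which is exactly the setting of \frankenmvh.

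Neither link of $u_1\le(\text{current key})\le f_1$ holds. A node's $\vecv f$ bounds its completions, not the $\vecv f$-values of its descendants, so keys can drop after a solution is found. Take edges $\sstart\to\sgoal$ of cost $(5,20)$ and $\sstart\to a\to d\to\sgoal$ of costs $(1,1)$, $(1,1)$, $(10,1)$, with $H(\sstart)=H(d)=\{\mathbf 0\}$ and $H(a)=\{(11,2)\}$ (admissible). The solution $(5,20)$ enters $\sols$ first. Then $a$ is expanded with key $(12,3)$ and generates $d$ with $\vecv f=(2,2)$. Finally $d$ is extracted and calls \textsc{ChooseH} with key $(2,2)$ while $u_1=5$.

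Worse, a node's current $\vecv f$ need not bound all of its own completions. \textsc{ChooseH} may already have skipped the vector $\vecv h^*$ that is admissible for a given completion, and then $g_1+h^*_1<f_1$ is possible. Your ``standard best-first argument'' overlooks exactly this case.

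The fallback does not help either. Grant that goal nodes enter $\sols$ in non-decreasing $g_1$ order; this does not follow from re-insertion raising $\vecv f$, since keys also drop through inconsistency. Even so, it says nothing about the completion being pruned right now, which by construction never reaches the goal. Appealing to it there is circular.

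What is missing is an argument that tracks the pruned completion $\vecv c=\vecv g+\vecv c(\pi)$ directly. Suppose $\vecv u$ was extracted at time $t_u$, and let $a$ be the deepest ancestor-or-self of the current node that was generated before $t_u$. Then $a$ was in \open{} at $t_u$, so $\vecv f(a)\ge_{\rm lex}\vecv u$. Let $\vecv h^*\in H(s(a))$ be admissible for the remainder of $\vecv c$. There are two cases.

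\begin{itemize}
\item If $\vecv h^*$ is not lexicographically before the vector bound to $a$ at $t_u$, then $u_1\le f_1(a)\le g_1(a)+h^*_1\le c_1$.
\item Otherwise \textsc{ChooseH} skipped $\vecv h^*$ before $t_u$ because of some $\vecv u'$ inserted before $\vecv u$ with $\Tr(\vecv u')\preceq\Tr(\vecv c)$.
\end{itemize}

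The second case is handled by strong induction over the order in which solutions enter $\sols$. The claim is: for every solution $\vecv u$ and every completion $\vecv c$ of a node processed after $\vecv u$ was found, either $u_1\le c_1$ or $\vecv c$ is weakly dominated by a solution found before $\vecv u$. In the second case, apply the hypothesis to $\vecv u'$ and $a$. Together with your Part~2, this shows $\vecv c$ is weakly dominated by a solution already in $\sols$, which is what the lemma needs.
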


\begin{proof}
\textsc{ChooseH} scans all heuristic vectors $\vecv{h}\in H(s)$ and returns the lexicographically-first vector which is not $t$-discarded by $T$. When calling \textsc{ChooseH} against~$\GTcl{\sgoal}$, returning $\bot$ means that every admissible heuristic choice is discarded by an existing solution in~$\GTcl{\sgoal}$. Since $H$ is admissible, each $\vecv{f} = \vecv{g}+\vecv{h}$ is a lower bound on any completion through $s$. Therefore, if all such evaluations are dominated by already discovered solution costs, no continuation of $\vecv{g}$ can produce a new Pareto-optimal solution.
\end{proof}

\begin{lemma}[Soundness of \textsc{LocalDomCheck}]
\label{app:lem:franken-local-full}
The two stages of pruning executed within \textsc{LocalDomCheck} are sound.
\end{lemma}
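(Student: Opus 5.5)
We read ``sound'' as follows: whenever \textsc{LocalDomCheck}$(s,\vecv g)$ returns \textbf{true}, some $\vecv g^\ast$ of a path to $s$ that was previously extracted and closed satisfies $\vecv g^\ast \preceq \vecv g$. Every completion of the pruned path is then weakly dominated by the same completion of a retained path. The plan is to follow the control flow of \textsc{LocalDomCheck} and split into the three places where it can return.

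The first step is an invariant on the two closed sets. Every vector inserted into $G_{\mathrm{cl}}(s)$ and $\GTcl{s}$ comes from an extracted node that passed the checks, namely the lines following Line~\ref{line:alg2-local-dom}. Hence each element of $\GTcl{s}$ is $\Tr(\vecv g')$ for some $\vecv g'\in G_{\mathrm{cl}}(s)$. We read the condition $\max_{\vecv g'\in\GTcl{s}} g'_1$ as ranging over these full preimages. Equivalently, the implementation keeps the full vectors alongside the truncations, so that $\max g'_1$ bounds the first component of every closed $\vecv g$-value of $s$.

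The case analysis then has three parts.
\begin{enumerate}
\item If no $\vecv g'$ satisfies $\Tr(\vecv g')\preceq\Tr(\vecv g)$, the function returns \textbf{false}. Nothing is pruned, so soundness is trivial.
\item If some witness $\vecv g'$ has $\Tr(\vecv g')\preceq\Tr(\vecv g)$ and $\max g'_1\le g_1$, then in particular $g'_1\le g_1$ for that witness. Combined with the truncated dominance this gives $\vecv g'\preceq\vecv g$ componentwise, hence the pruning is sound. This is exactly the content of Property~\ref{prop:dr}, but here it is checked directly rather than assumed.
\item If the max test fails, the function returns \textbf{true} only when Line~\ref{line:alg2-fallback} finds $\vecv g'\in G_{\mathrm{cl}}(s)$ with $\vecv g'\preceq\vecv g$. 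This is the standard \namoa\ closed-set dominance check, and it is sound by definition.
\end{enumerate}

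The main obstacle is the equal-cost edge case. When $\vecv g'=\vecv g$, weak rather than strict dominance prunes a cost-duplicate. This is acceptable because the output is only required to be cost-unique, but it must be argued explicitly.

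A second subtlety is that $G_{\mathrm{cl}}(s)$ is maintained by plain union and is not necessarily non-dominated. This only enlarges the comparison set with genuine closed paths, so soundness is unaffected. Finally, we conclude that both stages return \textbf{true} only when a genuinely closed path weakly dominates $\vecv g$. That is precisely the premise used in the correctness argument of Thm.~\ref{thm:frankenmvh-correctness}.
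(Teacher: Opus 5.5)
Your proposal is correct and follows the paper's own proof: the same three-way split along the control flow of \textsc{LocalDomCheck}, namely (i) not $t$-discarded, so nothing is pruned; (ii) $t$-discarded with $\max g'_1 \le g_1$, so the witness weakly dominates $\vecv g$ in every component; (iii) otherwise the standard full \namoa\ dominance check against $G_{\mathrm{cl}}(s)$ decides. Your handling of the truncated-set notation (the max ranging over full preimages), the equal-cost case, and the non-reduced $G_{\mathrm{cl}}(s)$ only spells out points the paper leaves implicit.
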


\begin{proof}
Let a generated node be identified by $n=(s, \vecv{g}, \vecv{f})$. If $\GTcl{s}\not\prec\Tr(\vecv{g})$, then $G_{\mathrm{cl}}(s) \not\prec \vecv{g}$ must also be true, and the node is correctly not pruned (Line~42). If $\GTcl{s} \preceq \Tr(\vecv{g})$, we divide the proof into cases:
If $\max\{g'_1 \mid \vecv{g}' \in \GTcl{s}\} \le g_1$, then every previously extracted node with state $s$ has a first-component cost of at most $g_1$, which is the exact condition required for $t$-discarding correctness.
If $\max\{g'_1 \mid \vecv{g}' \in \GTcl{s}\} > g_1$ then $g$ is not dominated by discovered vectors in $\GTcl{s}$, and pruning is evaluated against the full discovered Pareto frontier discovered in the closed-set $G_{\mathrm{cl}}(s)$ (Line~45). In this final case, pruning is performed exactly as in the standard, proven \namoa\ dominance check.
\end{proof}

\begin{theorem}[Correctness of \frankenmvh{}]
\label{app:thm:franken-correct}
Assume $H$ is an admissible multi-valued heuristic. Then \frankenmvh\ returns the complete Pareto-optimal solution set.
\end{theorem}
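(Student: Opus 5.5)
The argument shows that \frankenmvh\ never removes from consideration a path that is needed for a Pareto-optimal solution, and that every solution it reports is Pareto-optimal. Termination is routine: the graph is finite, edge costs are non-negative, each path is kept only while its $\vecv g$-value is not dominated locally, and each node can be re-inserted at most $|H(s)|$ times. We therefore concentrate on completeness and soundness.

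\textbf{Completeness.} Fix a Pareto-optimal solution $\pi^*=[\sstart=s_1,\dots,s_\ell=\sgoal]$ with cost $\vecv c^*$. We prove by induction an invariant: as long as no solution of cost $\vecv c^*$ is in $\sols$, there is a node in \open{} whose path $\pi$ to some $s_j$ satisfies two conditions:
\begin{itemize}
\item[(a)] $\vecv g(\pi)+\vecv c(s_j\to\sgoal \text{ along } \pi^*)\preceq\vecv c^*$, so by optimality of $\pi^*$ the two are equal;
\item[(b)] the heuristic vector bound to the node weakly dominates the remaining cost of $\pi^*$ or lexicographically precedes such a vector.
\end{itemize}
The invariant can only be broken in three places, and each is handled separately.
\begin{itemize}
\item[(i)] \emph{Global discarding in \textsc{ChooseH} or at Line~\ref{line:alg2-reinsertion}.} By Lemma~\ref{app:lem:chooseh-sound} and admissibility, some $\vecv h\in H(s_j)$ weakly dominates the remaining cost, so $\vecv g+\vecv h\preceq\vecv c^*$. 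If this $\vecv f$ were $t$-discarded by $\GTcl{\sgoal}$, an already-found solution would weakly dominate $\vecv c^*$ in every component. Here we use that the first component of solutions is handled correctly, because goal nodes also pass through \textsc{LocalDomCheck}. Hence that solution has cost equal to $\vecv c^*$ or dominates it, a contradiction. So \textsc{ChooseH} never skips past this $\vecv h$, and re-insertion preserves (b).
\item[(ii)] \emph{Local pruning in \textsc{LocalDomCheck}.} We use Lemma~\ref{app:lem:franken-local-full}. Pruning happens only if some closed $\vecv g'$ at $s_j$ weakly dominates $\vecv g$, either directly through the fallback against $G_{\mathrm{cl}}(s_j)$, or through truncation when every stored $g'_1\le g_1$. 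In both cases a closed path $\pi'$ with $\vecv g(\pi')\preceq\vecv g(\pi)$ was expanded. By (a) and optimality its extension along the suffix of $\pi^*$ also reaches cost $\vecv c^*$. The witness therefore transfers to the descendant of $\pi'$ on that suffix, and we induct on the number of such replacements, which is finite.
\item[(iii)] \emph{Expansion.} Expanding the witness generates its successor on $\pi^*$. By (i) and (ii) that successor is either inserted or replaced by an equivalent witness.
\end{itemize}
When \open{} empties, the invariant forces a goal node of cost $\vecv c^*$ to have been added to $\sols$ at Line~\ref{line:alg2-sols-update}.

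\textbf{Soundness.} Each goal node added to $\sols$ first passes \textsc{LocalDomCheck} at $\sgoal$, so it is not weakly dominated by any previously closed goal cost. It remains to rule out that a dominated solution is added \emph{before} the solution that dominates it. Best-first extraction combined with admissibility gives this: the witness of the dominating solution $\pi^*$ has some $\vecv f\preceq\vecv c^*\prec\vecv c(\pi)$, so it is lexicographically smaller than the goal node of $\pi$ and is extracted first. Applying this repeatedly along $\pi^*$ yields $\vecv c^*$ first, and that closed goal cost then prunes $\pi$. The point needing care is that the witness's \emph{currently bound} $\vecv f$ must be lex-$\le\vecv c^*$; this is exactly what invariant (b) together with \textsc{ChooseH} scanning in lexicographic order guarantees.

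\textbf{Main obstacle.} We expect the hardest step to be (ii). The truncated check is justified only by the local $\max g'_1\le g_1$ test, while $\GTcl{s}$ is not filtered by $\nd$ and may mix vectors that were inserted out of order. We need the lemma that whenever $\Tr(\vecv g')\preceq\Tr(\vecv g)$ and $g'_1\le g_1$ for the witnessing $\vecv g'$, the untruncated vector $\vecv g'$ weakly dominates $\vecv g$. The tempting step to avoid is that some arbitrary element of $\GTcl{s}$ is the witness. We would make this precise by storing full vectors in $\GTcl{s}$, as Line~\ref{line:alg2-t-discarding} implicitly does, and checking that the max-test covers the particular witness.
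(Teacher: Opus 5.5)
\textbf{Gap in step (i).} Your decomposition matches the paper's: global discarding via \textsc{ChooseH}, local discarding via \textsc{LocalDomCheck}, then best-first extraction plus admissibility. Your explicit witness invariant and separate soundness direction are more careful than the paper's short argument. However, step (i) gives an invalid justification at exactly the delicate point.

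The global tests against $\GTcl{\sgoal}$, at Line~\ref{line:alg2-reinsertion} and inside \textsc{ChooseH}, compare only components $2,\dots,N$. Unlike \textsc{LocalDomCheck}, they have no guard on the first component. From $\Tr(\vecv c)\preceq\Tr(\vecv g+\vecv h^*)\preceq\Tr(\vecv c^*)$ you may conclude $\vecv c\preceq\vecv c^*$ only if $c_1\le c^*_1$. Your reason, that goal nodes also pass through \textsc{LocalDomCheck}, does not give this. That check compares a new goal cost against earlier goal costs, never against a $\vecv c^*$ that has not yet been found. With a merely admissible $H$, extraction is not lexicographically monotone: a child's $\vecv f$ can be lex-smaller than its parent's. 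So a closed solution can have a larger first component than a key extracted later, and the truncated global test is not justified for arbitrary nodes.

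\textbf{How to repair it.} What rescues nodes on a not-yet-found Pareto-optimal path is the ordering fact you use only later, in your soundness paragraph. A goal node popped while your witness sits in \open{} with key $\le_{\rm lex}\vecv c^*$ has cost $\vecv c\le_{\rm lex}\vecv c^*$. Hence $c_1\le c^*_1$, so $\Tr(\vecv c)\preceq\Tr(\vecv c^*)$ forces $\vecv c=\vecv c^*$. The two facts depend on each other. Invariant (b) survives a \textsc{ChooseH} call only if $\vecv h^*$ is not $t$-discarded, which needs this ordering fact at earlier goal extractions. The ordering fact in turn needs (b). So they must be proved by one simultaneous induction over iterations, and (i) should invoke that induction instead of \textsc{LocalDomCheck}.

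Correspondingly, your ``main obstacle'' is not one. The test $\max g'_1\le g_1$ ranges over all stored vectors, so it automatically covers whichever $\vecv g'$ witnesses $\Tr(\vecv g')\preceq\Tr(\vecv g)$; one only needs the first components recorded alongside $\GTcl{s}$. The real work is in the unguarded global test. The paper's proof does not help here either: its \textsc{ChooseH} lemma also reads $t$-discarding by $\GTcl{\sgoal}$ as dominance by found solutions, so you must supply this argument yourself.
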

\begin{proof}
A generated node $n=(s, \vecv{g}, \vecv{f})$ may be discarded in only two ways. First, it may fail the global check. By Lemma~\ref{app:lem:chooseh-sound}, this can happen only after all heuristic choices have been exhausted, in which case no completion of $\vecv{g}$ can lead to a Pareto-optimal solution. Second, it may be removed by the local check, which is proven sound by Lemma~\ref{app:lem:franken-local-full}. Thus, every discarded node is truly dominated, while every non-dominated path remains eligible for extraction. Since the search is best-first and $H$ is admissible, every Pareto-optimal solution is eventually generated and inserted into $\sols$. Therefore, the returned set is exactly the Pareto-optimal solution set $\Pi^*$.
\end{proof}

\fi

\end{document}